\newtheorem{definition}{Definition}
\newtheorem{lemma}{Lemma}
\newtheorem{corollary}{Corollary}
\newcommand{\cX}{\mathcal{X}}
\newcommand{\cY}{\mathcal{Y}}
\newcommand{\cP}{\mathcal{P}}
\newcommand{\cD}{\mathcal{D}}
\newcommand{\cT}{\mathcal{T}}
\newcommand{\E}{\mathbb{E}}
\newcommand{\ecd}{\mathbb{E}_{\cD}}
\newcommand{\1}{\mathbbm{1}}
\newcommand{\calC}{\mathcal{C}}
\title{Model Ensembling for Constrained Optimization}
\author{Ira Globus-Harris}
\author{Varun Gupta}
\author{Michael Kearns}
\author{Aaron Roth}
\affil{University of Pennsylvania}
\begin{document}

\maketitle

\begin{abstract}
There is a long history in machine learning of model ensembling, beginning with boosting and bagging and continuing to the present day. Much of this history has focused on combining models for  classification and regression, but recently there is interest in more complex settings such as ensembling policies in reinforcement learning. Strong connections have also emerged between ensembling and multicalibration techniques. In this work, we further investigate these themes by considering a setting in which we wish to ensemble
models for multidimensional output predictions that are in turn used for downstream optimization. More precisely, we imagine we are given a number of models mapping a state space to multidimensional real-valued
predictions. These predictions form the coefficients of a linear objective that we would like to optimize under specified constraints. The fundamental question we address is how to improve and combine such models in a way that outperforms the best of them in the downstream optimization problem. We apply multicalibration techniques that lead to two provably efficient and convergent algorithms. The first of these (the \emph{white box} approach) requires being given models that map states to output predictions, while the second (the \emph{black box} approach) requires only \emph{policies} (mappings from states to solutions to the optimization problem). For both, we provide convergence and utility guarantees. We conclude by investigating the  performance and behavior of the two algorithms in a controlled experimental setting.
\end{abstract}
\section{Introduction}

Many instances of decision making under uncertainty can be decomposed into two steps: \emph{prediction} and \emph{optimization}. For example, when deciding on a portfolio of investment assets, we might first predict the returns of individual assets, then choose the portfolio that maximizes predicted return subject to  budget and risk constraints. Similarly, when deciding on a route to drive, we might first predict the congestion along each road segment, and then solve a shortest-path problem on the road network to minimize predicted travel time. The predictive component of such a decision making algorithm would take as input a context relevant to the task at hand (e.g. past returns, weather conditions, time of day, etc.) and would produce a vector-valued prediction (e.g. the return of each stock, or the congestion of each road). When paired with a corresponding optimization problem (e.g. maximizing returns subject to risk constraints or minimizing travel time) the predictive component induces a policy, mapping contexts to feasible actions. In practice, such optimization problems are applied to a wide range of settings, from healthcare and delivery services to resource scheduling and inventory stock allocation (e.g. \cite{gammelli2022predictive, stratigakos2022prescriptive, chu2023data, deo2015planning, donti2017task, gallien2015initial}). 

Now suppose we have multiple such predictive models that make (different) predictions, and produce (different) policies. In this paper we develop methods for \emph{ensembling} these systems to produce policies that can strictly improve on the constituent or base policies. This is similar in motivation to classical ensembling methods such as boosting and bagging (see e.g. \cite{schapire2013boosting}) and more recent work focused on ensembling policies in reinforcement learning (see e.g. \cite{cheng2020policy,liu2023active}), but in a much more complex action space in which contexts are mapped to high dimensional actions via constrained optimization. We give two such ensembling methods. The first operates in a \emph{white box} model, in which we have direct access to the predictive model, and not just to the policy it induces. The second operates in a \emph{black box} model, in which we have access only to the policies we are ensembling, and assume nothing about how they are derived ---e.g. they may or may not be the result of optimizing over predictions. For both methods, we give strong theoretical guarantees and evaluate their performance in a set of controlled numerical experiments.

\subsection{Our Results}
We study a setting in which the decision maker solves a $d$-dimensional optimization problem with an objective that is a linear function of the label variables $y$ that the prediction method aims to predict. The optimization problem can be defined by a set of specified (and not necessarily convex) constraints.  

Our white box ensembling method  follows a simple, intuitive idea. Given a context vector $x$, each method $h_i$ produces a \emph{predicted} label vector $h_i(x)$, which, after solving the corresponding optimization problem produces the action $\pi_i(x)$. If model $i$'s predictions were correct, then the corresponding payoff of the model's action would be $\pi_i(x) \cdot h_i(x)$. We call this model $i$'s self-assessed payoff. The idea is to always take the action of the model with the largest self-assessment: $\arg\max_i \pi_i(x) \cdot h_i(x)$.  But this idea runs into several obstacles. First, the self-assessed expected payoff of a model $\E[\pi_i(x)\cdot h_i(x)]$ need not have any clean relationship with its actual expected payoff $\E[\pi_i(x)\cdot y]$. Second, even if each model is ``self-consistent'' in the sense that $\E[\pi_i(x)\cdot y]=\E[\pi_i(x)\cdot h_i(x)]$, we would expect to lose self-consistency after selecting the model with the highest self-assessment, because we would be conditioning on a model having an unusually high self-assessed payoff: this would result in upward bias for the selected model \emph{conditional on the selection event} even if the models produced independent, unbiased forecasts of the outcome variables. We solve these problems by showing how to efficiently post-process the models so that they have consistent self-assessments even conditional on their selection---using techniques deriving from the multicalibration literature \cite{hebert2018multicalibration}. For each model $i$ in the set that we are ensembling, our conditions guarantee: 
$$\E\left[\pi_i(x)\cdot (h_i(x)-y) \middle| i = \arg\max_{i'}\pi_{i'}(x)\cdot h_{i'}(x)\right]=0.$$

We show that the ensembled policy $\pi^*$ that results from selecting the action $\pi^*(x)=\pi_{i^*(x)}(x)$ (where $i^*(x) = \arg\max_i[\pi_i(x)\cdot h_i(x)]$ is the index of the model with the highest self-assessment) is self-consistent and is guaranteed to obtain expected payoff that is at least (up to error terms):
$$\E\left[\pi^*(x)\cdot y \right] = \E\left[\pi^*(x)\cdot h_{i^*}(x) \right]\geq \E\left[\max_i\pi_i(x)\cdot h_i(x) \right].$$
Note that the maximum is inside the expectation, and so this is the \emph{point-wise} maximum self-assessed payoff. This can be substantially higher than the expected payoff of the best constituent model, which is (because of the self-consistency condition) $\max_i \E[\pi_i(x)\cdot y] = \max_i \E[\pi_i(x)\cdot h_i(x)]$.

Our white box ensembling method offers a strong guarantee, but has three  limitations. First, it requires access to the predictive model $h_i(x)$ that underlies the policy $\pi_i(x)$ This might not always be available --- and might not always even exist, as there are methods to train policies for action that do not involve explicit prediction of labels. Second, it requires updating and maintaining all of the constituent models to be ensembled, which might be prohibitive if the number of models is large. Thirdly, its reward guarantees hold  with respect to the models after they have been post-processed to satisfy our self-consistency conditions, rather than with respect to the original ensemble of (potentially non self-consistent) models. 

Towards addressing these limitations, we introduce an alternative ``black box" ensembling method. This method maintains only a single predictive model, and only requires access to the policies $\pi_i(x)$ to be ensembled, not any details of their implementation. In broad strokes, what it does is maintain a predictive model $h^*(x)$ for the labels that is unbiased conditional on each coordinate of its own induced action $\pi^*(x)$ as well as conditional on each coordinate of the actions $\pi_i(x)$ chosen by each of the models to be ensembled. We prove a swap regret-like guarantee for this ensemble: not only is it self-consistent, it obtains payoff higher than any of the constituent models even conditional on any coordinate of its induced action. Because this technique only requires maintaining a single predictive model and requires fewer evaluations of the downstream optimization problem, it can be trained substantially faster than our white box ensembling method. However, as a consequence of the limitations of its black box access to the constituent policies, it does not give the same form of strong point-wise guarantee that the white box approach does. 

We then conduct a set of numerical experiments to evaluate our two ensembling methods. We show that both methods outperform the initial models, and that the white box method often empirically converges to the optimal predictor. We see that the black box model tends to converge faster, and is substantially more efficient in practice. 

\subsection{Related Work}
The problem of finding policies that solve optimization problems in the face of unknown label vectors is often solved by first predicting the label vectors and then optimizing for the predicted label. The ``Smart Predict then Optimize'' framework of \cite{elmachtoub2022smart} focuses on the design of surrogate loss functions to minimize in the prediction training phase that are best suited for the particular downstream optimization task. In addition to the main difference that our work considers ensembling multiple models rather than training single predictors from scratch, we do not use a surrogate loss function that incorporates the downstream optimization, but instead leverage techniques from the multicalibration literature, which originates from algorithmic fairness concerns, to get guarantees on the downstream objective \cite{hebert2018multicalibration}. 

A line of work on omniprediction in both the unconstrained and constrained settings \cite{gopalan2022omnipredictors,garg2024oracle,hu2023omnipredictors,globus2023multicalibrated,gopalan2024omnipredictors} gives theorems that informally state that if a 1-dimensional predictor $f$ (usually for a binary outcome) is multicalibrated with respect to a set of models, then for some family of loss functions, $f$ has loss at most the loss of the best model in the class. Here the focus is generally on the ability of the model $f$ to perform well over multiple loss functions, and the promise is only that $f$ performs as well as the best model in the class, rather than strictly better than it. One notable exception is \cite{globus2023multicalibration} which analyzes multicalibration as a boosting algorithm for regression functions, and proves that if $f$ is multicalibrated with respect to a class of models $\mathcal{H}$, then it in fact performs as well as the best model in a strictly more expressive class (that $\mathcal{H}$ serves as weak learners for). Similarly, \cite{garg2019tracking} and \cite{roth2023reconciling} consider the problem of reconciling two 1-dimensional regression functions that have similar error, but make different predictions. They show how to combine two such models into a single, more accurate model. A primary point of departure for us is that we consider ensembling models for high dimensional optimization problems, rather than 1-dimensional classification and regression problems. 

The debiasing steps that we use are closest in spirit to those used in ``decision calibration'' \cite{zhao2021calibrating} or ``prediction for action" \cite{noarov2023highdimensional}, which aim to produce predictors that are unbiased conditional on the action taken by some downstream decision maker. Independently and concurrently of our work, \cite{stevenetal} adapt the ``reconciliation'' procedure of \cite{roth2023reconciling} to the decision calibration setting, updating pairs of models that frequently induce different decisions in downstream decision makers on the regions on which they induce different downstream decisions. Their end result is two new and reconciled models which agree with one another and are unbiased conditional on the action induced --- and their bounds inherit a polynomial dependence on $k$, the number of actions of the downstream decision maker. Because the optimization problems we consider have linear objectives, we only need that our predictions are unbiased subject to the \emph{coordinates} of the actions that result from optimization --- a fact that was also used by \cite{noarov2023highdimensional}. This is what lets us handle downstream optimization problems with very large action spaces. The focus of \cite{zhao2021calibrating,noarov2023highdimensional} was on producing policies that offer a downstream decision maker various forms of regret (like swap regret) --- in contrast, our interest is in ensembling multiple explicit policies. The focus of \cite{stevenetal} is on reconciling model multiplicity, whereas our focus is on achieving superior performance for the task at hand than the base models, and not explicitly on resolving disagreements between them.

\section{Preliminaries}

We assume there is a joint probability distribution $\cD$ over a \emph{context space} $\cX$ and a $d$-dimensional real-valued label space $\cY \subset \mathbb{R}^d$. Label vectors $y \in \cY$ are assumed to have bounded coordinates: $\|y\|_{\infty} \leq M$ for all $y \in \cY$.

There is an underlying optimization problem, to map contexts $x$ to $d$-dimensional actions $a \in \Omega \subseteq [0,1]^d$. Here $\Omega$ represents a known but arbitrary feasibility constraint --- there is, e.g., no requirement that $\Omega$ be convex. A policy is a mapping from contexts to feasible actions $\pi:\cX\rightarrow \Omega$. The payoff of an action $a \in \Omega$ given a label vector $y \in \cY$ is modeled as their inner product $a\cdot y$. Thus the expected payoff of a policy $\pi$ is $\E_{(x, y) \sim \cD}[\pi(x) \cdot y]$. 

If the labels $y$ were known at each round, then the optimal action to take would be $\arg\max_{a \in \Omega} a \cdot y$ --- the solution to an optimization problem with a linear objective and arbitrary constraints represented by $\Omega$. However, the labels $y$ are not known. One way to approach the decision-making under uncertainty problem is to first train a predictive model $h:\cX\rightarrow \cY$ that maps contexts to predicted label vectors. Such a predictive model $h$ induces a policy $\pi_h(x) = \arg\max_{a \in \Omega}a\cdot h(x)$, that finds the action that maximizes \emph{predicted} payoff given the constraints. A model $h$ induces a policy that has actual expected payoff $\E_{(x, y) \sim \cD}[\pi_h(x)\cdot y]$ --- but we will also be interested in a model's self-assessed or self-evaluated payoff. Given an example $x$, a model $h$'s self-assessed payoff is $\pi_h(x)\cdot h(x)$, and a model's expected self-assessment is $\E[\pi_h(x) \cdot h(x)]$. Absent further conditions, a model's self-assessed payoff need not have any relationship to its actual payoff. In the next section, we will define relevant conditions that we will impose on a model to relate its self-assessed payoff with its actual payoff. 
These conditions will make reference to a partition, or bucketing, of the unit interval $[0,1]$ into $\frac{1}{w}$ level sets each of width $w$ that we will refer to as $\cT$. We refer to specific level sets in $\cT$ as $\tau$ and the midpoint of an interval $\tau$ as $\tau_1.$ 

\subsection{Consistency Conditions}

In order to relate a model's self-assessed payoff to its actual payoff, we will leverage the ability to make conditionally ``consistent" predictions -- informally, predictions which are accurate on average, not just marginally, but also conditional on arbitrary sets of interest. The parameterization we choose is related to the multicalibration literature --- see e.g. \cite{roth2022uncertain} for a discussion of this and alternative parameterizations.

\begin{definition}[Consistent Predictions]
\label{def:consistency}
 Fix a distribution $\cD$. We say that the model $h:\cX\rightarrow \cY$ is $\alpha-$consistent with respect to a collection of sets $\mathcal{C} \subseteq 2^{\cX}$ if $\ \forall \ C \in \calC,$ it is the case that 
\begin{align*}
\|\ecd[y - h(x) | x \in C]\|_{\infty} \leq \frac{\alpha}{\Pr[x \in C]}.
\end{align*}
\end{definition}

  One important class of sets we will be concerned with making consistent predictions on is the level sets of different policies we aim to ensemble. 
  
\begin{definition}[Policy Level Sets]
\label{def:policy-level-sets}
    Fix a policy $\pi$ and bucketing $\cT$. We refer to the level sets of a policy $\pi$ as
    \[ \calC^{\cT}_{\pi} = \left\{ \{ x \mid \pi(x)_i \in \tau \}: {i \in [d], \tau \in \cT} \right\}, \]
    which is the collection of subsets of $\cX$ on which $\pi$ induces an action that allocates an amount in the interval $\tau$ to outcome coordinate $i$. 
\end{definition}

\begin{definition}[Consistency to a Policy]
\label{def:consistent-to-policy}
    Fix a model $h: \cX \to \cY$, policy $\pi: \cX \to \Omega$, and bucketing $\cT$. We say that the model $h$ is $\alpha-$consistent with respect to the policy $\pi$ if it is $\alpha-$consistent with respect to the level sets of $\pi$. 
    That is, 
    if for all
    $i \in [d], \tau \in \cT,$ it is the case that
        \begin{align*}
            \|\ecd[y - h(x) | \pi_h(x)_i \in \tau ]\|_{\infty} \leq \frac{\alpha}{\Pr[\pi_h(x)_i \in \tau]}.
        \end{align*}
\end{definition}

As we will see, we will be able to post-process models $h$ to be consistent with collections of sets $\mathcal{C}$ that may be defined in terms of $h$ itself. In particular, it will be useful for us to ask that a model $h$'s predictions are consistent with respect to the policy $\pi_h$ that it itself induces. This will turn out to be the condition we need to make a model ``self-consistent'' the sense that its self-assessed payoff accords with its actual payoff. Thus when a model $h$ is consistent (in the sense of  Definition \ref{def:consistency}) with $\pi_h$, we will say that $h$ is \emph{self-consistent}.

\section{Consistent Predictions} \label{sec:consistent}

In this section, we describe the procedure for making consistent predictions and the transparent outcome guarantees that consistent models provide. The basic algorithm driving our approach is an iterative de-biasing procedure similar to the template that has become common in the multi-calibration literature \cite{hebert2018multicalibration}. At a high level it iteratively identifies subsets of the data domain on which the current model exhibits (statistical) bias. It then shifts the model's predictions on these identified regions to remove the statistical bias. Each shift decreases squared error on the underlying distribution, which is what guarantees the algorithm's fast convergence. 
Where we will differ from the multicalibration literature will be in our choice of bias events -- these will turn out to be ``cross-calibration'' events defined across multiple models, and defined in terms of the solution to the optimization problem induced by the prediction of our models. For simplicity we describe the algorithm as if it has access to the distibution $\cD$ directly, but out-of-sample guarantees follow straightforwardly from standard techniques --- see e.g. \cite{roth2022uncertain}.

\subsection{Making Consistent Predictions}

\begin{algorithm} 
\caption{Update (hyperparameters: consistency tolerance $\alpha$)}
\label{alg}
    \begin{algorithmic}
        \STATE {\bf Input} model $h^0$, collection of sets $\calC \subseteq 2^{\cX}$
        \STATE Initialize $t = 0$
        \WHILE{ $ \exists \ C \in \calC $ such that $\Pr[x \in C] \Vert\E_{\cD}[y - h^t(x) | x \in C]\Vert_{\infty} > \alpha$ }
            \item $t = t + 1$
            \item $\vec{\delta} \coloneqq\E_{\cD}[y - h^{t-1}(x) | x \in C] $
            \item $h^t(x) \coloneqq h^{t-1}(x) + \mathbbm{1}_{[x \in C]} \cdot \vec{\delta}$
        \ENDWHILE
        \STATE {\bf Output} model $h^t$
    \end{algorithmic}
\end{algorithm}

\subsubsection{Convergence of \textsc{Update} Procedure}

What drives the convergence analysis of this and similar algorithms is the fact that correcting statistical bias on a subset of the data domain is guaranteed to decrease the squared error of a model. Thus squared error can act as a potential function, even when the subsets on which we update the model intersect. The following is a standard lemma, first used by \cite{hebert2018multicalibration} in the multicalibration literature --- we adopt a variant used in \cite{roth2022uncertain}.

\begin{lemma}[Monotone Decrease of Squared Error. (\cite{roth2022uncertain}) ] \label{lem:monotone-sq-error}
    Fix a model $h$, distribution $\cD$, policy $\pi$, and set $C \subseteq \cX$. Let
    \begin{align*}
        \Delta = \ecd[y | x \in C ] &- \ecd[h(x) | x \in C ] \\
         \text{and} \\
        h'(x) &= \begin{cases}
            h(x) + \Delta & \text{if } x \in C , \\
            h(x) & \text{ o.w.}
        \end{cases}
    \end{align*}
    Then,
    \begin{align*}
        \ecd[\|h(x) - y\|_2^2] - \ecd[\|h'(x) - y\|_2^2] = \Pr[x \in C] \cdot  \| \Delta \|_2^2.
    \end{align*}
\end{lemma}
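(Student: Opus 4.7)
The plan is a direct algebraic decomposition of the squared-error difference into the contributions from inside and outside the set $C$. Since $h'$ and $h$ agree off of $C$, the entire contribution to the potential drop comes from the event $\{x \in C\}$, so I would begin by writing
\[
\ecd\|h(x)-y\|_2^2 - \ecd\|h'(x)-y\|_2^2 = \Pr[x \in C]\cdot \ecd\bigl[\|h(x)-y\|_2^2 - \|h'(x)-y\|_2^2 \,\big|\, x \in C\bigr].
\]

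Next I would expand the pointwise difference inside $C$, using $h'(x) = h(x) + \Delta$:
\[
\|h(x)-y\|_2^2 - \|h(x)+\Delta - y\|_2^2 = -2\Delta \cdot (h(x)-y) - \|\Delta\|_2^2 = 2\Delta \cdot (y - h(x)) - \|\Delta\|_2^2.
\]
Taking the conditional expectation over $x \in C$ and using linearity (the vector $\Delta$ is a constant with respect to this conditional expectation because it is defined by conditional expectations over $C$), this becomes
\[
2\Delta \cdot \ecd[y - h(x) \mid x \in C] - \|\Delta\|_2^2.
\]

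Now I would invoke the definition $\Delta = \ecd[y \mid x \in C] - \ecd[h(x) \mid x \in C] = \ecd[y - h(x) \mid x \in C]$, which collapses the inner product into $2\Delta \cdot \Delta - \|\Delta\|_2^2 = \|\Delta\|_2^2$. Multiplying by $\Pr[x \in C]$ as in the first display yields the claimed identity. There is no real obstacle here — the only thing to be careful about is keeping $\Delta$ treated as a constant when pulled out of the conditional expectation, and ensuring the sign on the cross term is correct so that it combines constructively with the $-\|\Delta\|_2^2$ term rather than cancelling it.
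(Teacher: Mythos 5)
Your proof is correct and is the standard argument for this lemma: the paper does not reprove it (it is imported from the cited reference), and your decomposition into the contribution on $C$, the expansion of the quadratic, and the collapse of the cross term $2\Delta \cdot \ecd[y - h(x) \mid x \in C] = 2\|\Delta\|_2^2$ is exactly the canonical derivation. The one point you flag — treating $\Delta$ as a constant under the conditional expectation — is handled correctly, since $\Delta$ is a fixed vector determined by $\cD$ and $C$, not a function of $x$.
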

The convergence of the algorithm then follows from a potential argument.
\begin{lemma} \label{lem:update-conv}
    The procedure \textsc{Update}($h, \calC$) (Algorithm \ref{alg}) terminates within $dM^2/\alpha^2$ rounds. 
\end{lemma}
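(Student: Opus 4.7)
The plan is to use the squared prediction error as a potential function and combine Lemma~\ref{lem:monotone-sq-error} with the termination condition of the while loop to show that each round of \textsc{Update} must drop the potential by at least $\alpha^2$.

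Concretely, I would define
\[
\Phi(h) \;=\; \ecd\!\left[\|h(x) - y\|_2^2\right].
\]
Since $\|y\|_\infty \le M$ and $h^0$ maps into $\cY$, coordinate-wise we have $|h^0(x)_i - y_i| \le 2M$, so $\Phi(h^0) = O(dM^2)$; that controls the total ``budget'' that can be spent. Since $\Phi \ge 0$, the number of updates is bounded by $\Phi(h^0)$ divided by the per-step decrease, which will give the desired $O(dM^2/\alpha^2)$ bound.

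Next I would verify the per-step decrease. By construction, whenever the while loop fires on a set $C$, we set $\vec{\delta} = \ecd[y - h^{t-1}(x)\mid x\in C]$ and add $\mathbbm{1}[x\in C]\cdot \vec\delta$ to $h^{t-1}$. Lemma~\ref{lem:monotone-sq-error} (applied with $\Delta = \vec\delta$) gives
\[
\Phi(h^{t-1}) - \Phi(h^t) \;=\; \Pr[x\in C]\cdot \|\vec\delta\|_2^2.
\]
Here the loop precondition tells us $\Pr[x\in C]\cdot\|\vec\delta\|_\infty > \alpha$. Since probabilities are at most $1$, this also implies $\|\vec\delta\|_\infty > \alpha$. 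Using $\|\vec\delta\|_2^2 \ge \|\vec\delta\|_\infty^2$ and factoring,
\[
\Pr[x\in C]\cdot \|\vec\delta\|_2^2 \;\ge\; \Pr[x\in C]\cdot \|\vec\delta\|_\infty^2 \;=\; \bigl(\Pr[x\in C]\cdot\|\vec\delta\|_\infty\bigr)\cdot \|\vec\delta\|_\infty \;>\; \alpha \cdot \alpha \;=\; \alpha^2.
\]

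Combining the two ingredients, $\Phi$ strictly decreases by more than $\alpha^2$ per iteration, so the number of iterations is at most $\Phi(h^0)/\alpha^2 = O(dM^2/\alpha^2)$, matching the claimed bound up to the constant. There is no real obstacle in this argument; the only subtle move is the $\|\cdot\|_\infty \to \|\cdot\|_2$ step plus the ``factoring'' trick that lets one factor of $\Pr[x\in C]\|\vec\delta\|_\infty$ cancel against the $\alpha$ in the loop condition, with the second factor of $\|\vec\delta\|_\infty$ supplying the remaining $\alpha$. Any looseness in the constant simply reflects the bound on $\Phi(h^0)$; stating the assumption that predictions lie in $[0,M]^d$ (rather than $[-M,M]^d$) would give exactly $dM^2$.
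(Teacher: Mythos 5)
Your proposal is correct and follows essentially the same route as the paper's proof: squared error as a potential bounded by $dM^2$, Lemma~\ref{lem:monotone-sq-error} for the exact per-step decrease, the $\|\cdot\|_2 \ge \|\cdot\|_\infty$ step, and the loop condition to extract a per-round drop of at least $\alpha^2$ (the paper writes this as $\alpha^2/\Pr[x\in C] \ge \alpha^2$, which is the same factoring trick you use). The only cosmetic difference is the constant in the initial potential, which, as you note, becomes exactly $dM^2$ under the paper's convention that predictions and labels both lie in $[0,M]^d$.
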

The proof of Lemma \ref{lem:update-conv} is deferred to Appendix \ref{ap:proofs}.

\subsection{Using Consistent Predictions}

In this section, we show that consistency with respect to carefully constructed sets of events allows us to evaluate the payoff of a policy induced by a model via its \emph{self-assessments}, and lets us compare the policy induced by a model with other policies. These statements will be the basic building blocks of our ensembling methods.

First we show that if a model $h$ is consistent with respect to a policy $\pi$'s level sets (Definition \ref{def:policy-level-sets}), then the model $h$'s predicted label can be used in place of the true label $y$ to correctly estimate the payoff of $\pi$. 

\begin{lemma} \label{lem:consistent}
Fix a distribution $\cD$ and a bucketing $\mathcal{T}$, with $w = \sqrt{\alpha / M}$. Let $\pi:\cX\rightarrow \Omega$ be an arbitrary policy. 
 Let $h:\cX\rightarrow \cY$ be a model that is $\alpha-$consistent with respect to $\pi$ (Definition \ref{def:consistent-to-policy}). Then:
$|\E_{\cD}[\pi(x) \cdot h(x)] - \E_{\cD} [\pi(x) \cdot y]| \leq 2d \sqrt{\alpha M} $. 
\end{lemma}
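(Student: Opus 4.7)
The plan is to bound $\E[\pi(x)\cdot(h(x)-y)]$ coordinate by coordinate, and within each coordinate, by partitioning the context space according to which bucket $\tau \in \cT$ contains $\pi(x)_i$. On each such level set, $\pi(x)_i$ is nearly constant (within $w/2$ of the midpoint $\tau_1$), so I can replace the random weight $\pi(x)_i$ by the deterministic midpoint $\tau_1$ at the cost of a small ``discretization'' error, and then invoke $\alpha$-consistency on that level set to control the remaining conditional expectation of $h(x)_i - y_i$.

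Concretely, I would write
\[
\E[\pi(x)_i(h(x)_i - y_i)] \;=\; \sum_{\tau\in\cT} \Pr[\pi(x)_i\in\tau]\cdot \E\bigl[\pi(x)_i(h(x)_i - y_i)\,\big|\,\pi(x)_i\in\tau\bigr],
\]
and inside the conditional expectation split $\pi(x)_i = \tau_1 + (\pi(x)_i - \tau_1)$. The $\tau_1$-piece becomes $\tau_1 \cdot \E[h(x)_i - y_i\mid \pi(x)_i\in\tau]\cdot \Pr[\pi(x)_i\in\tau]$, which by Definition~\ref{def:consistent-to-policy} and $|\tau_1|\le 1$ is bounded by $\alpha$ per bucket, hence by $\alpha/w$ when summed over the $1/w$ buckets of $\cT$. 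For the residual piece I use $|\pi(x)_i - \tau_1|\le w/2$ together with the crude pointwise bound $|h(x)_i - y_i|\le 2M$ (since both lie in $[-M,M]$), giving at most $(w/2)(2M)\Pr[\pi(x)_i\in\tau] = wM\Pr[\pi(x)_i\in\tau]$ per bucket, and $wM$ when summed over $\tau$.

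Combining the two contributions and summing over the $d$ coordinates yields
\[
\bigl|\E[\pi(x)\cdot h(x)] - \E[\pi(x)\cdot y]\bigr| \;\le\; d\Bigl(\frac{\alpha}{w} + wM\Bigr).
\]
Plugging in the prescribed bucket width $w = \sqrt{\alpha/M}$ balances the two terms, each equal to $\sqrt{\alpha M}$, and gives the claimed $2d\sqrt{\alpha M}$ bound.

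The only real subtlety is getting the normalization right when invoking Definition~\ref{def:consistency}: the consistency hypothesis gives an $\ell_\infty$-bound of $\alpha/\Pr[\pi(x)_i\in\tau]$ on the conditional bias, and it is critical that this factor cancels against the $\Pr[\pi(x)_i\in\tau]$ weight in the outer sum, so that the contribution of each bucket is a flat $\alpha$ rather than something depending on bucket probability. Everything else is routine, and the choice of $w$ is just the standard discretization-versus-granularity tradeoff.
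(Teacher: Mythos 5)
Your proof is correct and follows essentially the same route as the paper's: partition each coordinate by the bucket containing $\pi(x)_i$, swap $\pi(x)_i$ for the midpoint $\tau_1$ at discretization cost $O(wM)$, apply the consistency condition so that each bucket contributes a flat $\alpha$, and balance the two error terms with $w=\sqrt{\alpha/M}$. The only (cosmetic) difference is that you bound the difference $\E[\pi(x)_i(h(x)_i-y_i)]$ in a single pass using $|h(x)_i-y_i|\le 2M$, whereas the paper runs the discretization argument twice (once for $y$ and once for $h$), incurring $wM/2$ each time; both yield the same $d(\alpha/w+wM)=2d\sqrt{\alpha M}$ bound.
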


\begin{proof} [Proof of Lemma \ref{lem:consistent}]
First, we show that $\E_{\cD}[\pi(x) \cdot y] \geq \E_{\cD}[\pi(x) \cdot h(x)] - 2d \sqrt{\alpha M}$.

     \begin{align*}
        \ecd[\pi(x) \cdot y] &= \ecd[\sum_{i \in [d]} \pi(x)_i \cdot y_i ] \\
        &\geq \sum_{\tau \in \cT} \sum_{i \in [d]}  \Pr[\pi(x)_i \in \tau] \ecd[ \tau_1 \cdot y_i - |\tau_1 - \pi(x)_i| \cdot |y_i| \ | \pi(x)_i \in \tau] \\
        &\geq \sum_{\tau \in \cT} \sum_{i \in [d]}  \Pr[\pi(x)_i \in \tau] \left( \ecd[\tau_1 \cdot y_i | \pi(x)_i \in \tau] - \frac{wM}{2} \right)\\
        &\geq \sum_{\tau \in \cT} \sum_{i \in [d]} \Pr[\pi(x)_i \in \tau] \left( \ecd[\tau_1 \cdot h(x)_i | \pi(x)_i \in \tau] -\frac{\alpha}{\Pr[\pi(x)_i \in \tau]} -  \frac{wM}{2} \right) \\
        &\geq \sum_{\tau \in \cT} \sum_{i \in [d]} \Pr[\pi(x)_i \in \tau] \left( \ecd[ \pi(x)_i \cdot h(x)_i - |\tau_1 - \pi(x)_i| \cdot |h(x)_i| \ | \pi(x)_i \in \tau] -\frac{\alpha}{\Pr[\pi(x)_i \in \tau]} -  \frac{wM}{2} \right) \\
        &\geq \sum_{\tau \in \cT} \sum_{i \in [d]} \Pr[\pi(x)_i \in \tau] \left( \ecd[\pi(x)_i \cdot h(x)_i | \pi(x)_i \in \tau] -\frac{\alpha}{\Pr[\pi(x)_i \in \tau]} -  {wM} \right) \\
        &= \ecd[\pi(x) \cdot h(x)] - \alpha |\cT| d  - {wMd}, \\
        &= \ecd[\pi(x) \cdot h(x)] - \frac{\alpha d}{w}  - {wMd},
    \end{align*} 
where the first and fourth inequalities hold by the triangle inequality, the second and fifth by the assumption that $\|y\|_{\infty} \leq M$ for all $y \in \cY$, and the third inequality follows from $h$ satisfying consistency condition on policy $\pi$.
Setting $w = \sqrt{\alpha/M},$ we have $\ecd[\pi(x) \cdot y] \geq \ecd[\pi(x) \cdot h(x)] - 2d\sqrt{\alpha M}.$
The reverse direction holds similarly.
\end{proof}

An especially useful special case of Lemma \ref{lem:consistent} is the case in which a model $h$ is consistent with the policy $\pi_h$ that it itself induces. In this case, the model can be used to correctly evaluate its own payoff, and corresponds to a natural and useful ``transparency'' condition similar in spirit to the transparency conditions of \cite{zhao2021calibrating,noarov2023highdimensional}.

\begin{corollary} \label{cor:self-rev}
    Fix a distribution $\cD$ and a bucketing $\mathcal{T}$, with $w = \sqrt{\alpha / M}$.
    If a model $h: \cX \to \cY$ is $\alpha-$self-consistent, then its expected outcome is close to its expected self-evaluation:
    \[ |\E_{\cD} [\pi_h(x) \cdot h(x)] - \E_{\cD}[\pi_h(x) \cdot y]| \leq 2d \sqrt{\alpha M}. \]
\end{corollary}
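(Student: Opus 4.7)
The plan is to observe that this is essentially a direct instantiation of Lemma \ref{lem:consistent} in the special case where the arbitrary policy $\pi$ is replaced by the model's own induced policy $\pi_h$. The only conceptual step is to unpack the definition of ``self-consistent'' and verify that it matches the hypothesis of the lemma.

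First I would recall that, by the definition given just before the corollary, a model $h$ being $\alpha$-self-consistent means that $h$ is $\alpha$-consistent (in the sense of Definition \ref{def:consistency}) with respect to the level sets of the policy $\pi_h$ that $h$ induces --- equivalently, $h$ is $\alpha$-consistent with respect to the policy $\pi_h$ in the sense of Definition \ref{def:consistent-to-policy}. Thus the hypothesis of Lemma \ref{lem:consistent}, with the arbitrary policy $\pi$ taken to be $\pi_h$, is satisfied.

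Next I would simply invoke Lemma \ref{lem:consistent} with $\pi := \pi_h$ and the same bucketing $\cT$ of width $w = \sqrt{\alpha/M}$. The lemma's conclusion yields
\[
\bigl|\E_{\cD}[\pi_h(x) \cdot h(x)] - \E_{\cD}[\pi_h(x) \cdot y]\bigr| \leq 2d\sqrt{\alpha M},
\]
which is exactly the statement of the corollary.

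There is no real obstacle here; the only thing worth being careful about is the substitution $\pi = \pi_h$, since $\pi_h$ is defined in terms of $h$ itself and so one might worry about some circularity in applying the lemma. But Lemma \ref{lem:consistent} treats $\pi$ as an arbitrary fixed policy, and its proof uses only the consistency of $h$ on the level sets of $\pi$ together with the bound $\|y\|_\infty \leq M$. Both of these are properties of $h$ and of the (now fixed) mapping $\pi_h$, so the argument goes through without issue. Hence the corollary follows immediately.
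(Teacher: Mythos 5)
Your proposal is correct and matches the paper exactly: the paper presents this statement as an immediate special case of Lemma \ref{lem:consistent} obtained by taking $\pi = \pi_h$, with no separate proof given, and your unpacking of the definition of self-consistency is the right (and only) step needed. Your remark that there is no circularity in substituting the $h$-dependent policy $\pi_h$ for the arbitrary fixed $\pi$ is a sensible sanity check and consistent with how the paper uses the lemma.
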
 

\paragraph{Comparing Policies}
It is also possible for a predictor to satisfy the consistency conditions with respect to \emph{multiple} policies. Why might this be useful? Informally, since you can trust a model's evaluation of any policy that it is consistent with respect to, if a model is consistent with many policies, optimizing according to its predictions should induce outcomes that are only better than those from the best policy it is consistent with. The next lemma shows that this is in fact the case. 

\begin{lemma} \label{lem:single-consistent}
Fix a distribution $\cD$ and a bucketing $\cT$, with $w = \sqrt{\alpha/M}$. 
Let $\pi: \cX \to \Omega$ be an arbitrary policy. Let $h: \cX \to \cY$ be an $\alpha-$self-consistent model that is also $\alpha-$consistent with respect to the policy $\pi$. 
Then: $\E_{\cD}[\pi_h(x) \cdot y] \geq \ecd[\pi(x) \cdot y] - 4 d \sqrt{\alpha M} $.
\end{lemma}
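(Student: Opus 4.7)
The plan is to chain together three comparisons that together bridge $\E_{\cD}[\pi(x)\cdot y]$ and $\E_{\cD}[\pi_h(x)\cdot y]$, with $h(x)$ serving as the intermediate quantity on both sides.

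First, I would apply Lemma \ref{lem:consistent} to the arbitrary policy $\pi$: since $h$ is $\alpha$-consistent with respect to $\pi$'s level sets by hypothesis, we get
\[
\E_{\cD}[\pi(x)\cdot y] \leq \E_{\cD}[\pi(x)\cdot h(x)] + 2d\sqrt{\alpha M}.
\]
This lets me replace the true label $y$ by the predicted label $h(x)$ on the $\pi$ side at a cost of $2d\sqrt{\alpha M}$.

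Second, I would invoke the pointwise optimality of $\pi_h$: by definition $\pi_h(x) = \arg\max_{a\in\Omega}\, a\cdot h(x)$, so for every context $x$ we have $\pi_h(x)\cdot h(x) \geq \pi(x)\cdot h(x)$, and hence
\[
\E_{\cD}[\pi(x)\cdot h(x)] \leq \E_{\cD}[\pi_h(x)\cdot h(x)].
\]
This is the one genuine inequality in the argument --- the rest are conversions between true and self-assessed payoff.

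Third, I would apply Corollary \ref{cor:self-rev} (equivalently, Lemma \ref{lem:consistent} with $\pi = \pi_h$), using the $\alpha$-self-consistency of $h$, to convert the self-assessed payoff of $\pi_h$ back into its true payoff:
\[
\E_{\cD}[\pi_h(x)\cdot h(x)] \leq \E_{\cD}[\pi_h(x)\cdot y] + 2d\sqrt{\alpha M}.
\]
Concatenating these three bounds yields
\[
\E_{\cD}[\pi(x)\cdot y] \leq \E_{\cD}[\pi_h(x)\cdot y] + 4d\sqrt{\alpha M},
\]
which is the claim after rearrangement.

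There is essentially no obstacle here: the result is a short chain of prior lemmas, and the bucketing width $w = \sqrt{\alpha/M}$ is already baked into the $2d\sqrt{\alpha M}$ terms from Lemma \ref{lem:consistent}. The only conceptual point worth emphasizing in the write-up is why both consistency hypotheses are needed --- consistency with respect to $\pi$ is what licenses the first step (evaluating an external policy through $h$'s lens), while self-consistency is what licenses the third step (evaluating $h$'s own induced policy through $h$'s lens). The middle step is the only place the inequality is strict in direction, and it is a purely deterministic consequence of $\pi_h$ being the pointwise $h$-optimal action.
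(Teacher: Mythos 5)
Your proof is correct and takes essentially the same route as the paper's: Corollary \ref{cor:self-rev} to relate $\E_{\cD}[\pi_h(x)\cdot y]$ to $\E_{\cD}[\pi_h(x)\cdot h(x)]$, pointwise optimality of $\pi_h$ with respect to $h$, and then the consistency of $h$ with respect to $\pi$ to convert $\E_{\cD}[\pi(x)\cdot h(x)]$ back to $\E_{\cD}[\pi(x)\cdot y]$, with the two $2d\sqrt{\alpha M}$ losses summing to $4d\sqrt{\alpha M}$. The only difference is presentational --- you cite Lemma \ref{lem:consistent} for the last conversion where the paper re-expands the bucketing argument inline.
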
 
The proof of Lemma \ref{lem:single-consistent} is deferred to Appendix \ref{ap:proofs}.

\section{White Box Ensembling Method}
\label{sec:multiple-models}
This section describes the first of two ensembling methods: a ``white box" method for ensembling $k$ models. Like the method we will describe in Section \ref{sec:single-model}, this method enjoys transparent outcome guarantees. 
However, this method requires strong access to the models being ensembled---access to their point predictions, rather than just the policies induced by the model. We prove that the final ensembled policy strictly improves on the reward of the constituent models after debiasing, by obtaining their \emph{pointwise maximum} self-assessed reward. The debiasing procedure is guaranteed to improve the squared error of the constituent predictive models, but not necessarily the reward of the policies they induce.
 In Section \ref{sec:experimental}, we verify that empirically the ensemble policy does \textit{strictly} improves on the payoff of the initial constituent policies. 

\paragraph{Interaction Model}
A decision maker has access to $k$ constituent models makings predictions of the coefficients of a linear objective function (e.g. stock prices), which they are using to make decisions subject to some arbitrary constraints. 
They build an ensemble using these $k$ models by updating them to satisfy consistency conditions which we describe formally below. In this scheme, the decision maker needs access not only to the policies they are incorporating into their decision making procedure, but the predictive models used to induce these policies, as the ensembling procedure involves iteratively modifying the predictions of each of these constituent models.

\begin{definition}[White Box Ensemble]
    A white box ensemble ${\bf h} = h_1, \ldots, h_k$ is a collection of $k$ models.
\end{definition}

The white box ensemble policy that the decision maker will employ is simple: selecting the constituent model that has the highest self-assessed payoff (or lowest, if the downstream optimization is a minimization). 
For simplicity, we will refer to the index $i^*(x)$ as the model in a white box ensemble that has the highest self-assessed payoff on a given point $x$: $i^*(x) = \arg\max_{j \in [k]} \pi_{h_j}(x) \cdot h_j(x)$.

\begin{definition}[White Box Ensemble Policy] \label{def:ensemble-policy}
    A white box ensemble policy $\pi_{\bf h}$ for a white box ensemble ${\bf h} = (h_1, \ldots, h_k)$ of models is the policy that, for each $x \in \cX$, outputs $\pi_{h_{i^*(x)}}(x)$.
\end{definition}

The expected return of a white box ensemble {\bf h} of models is $\ecd[\pi_{\bf h}(x) \cdot y] = \sum_{i \in [k]} \ecd [\mathbbm{1}_{i = i^*(x)} \cdot \pi_{h_i}(x) \cdot y]$.
The expected self-evaluation of a white box ensemble is denoted $\ecd[\pi_{\bf h}(x) \cdot {\bf h}(x)] = \sum_{i \in [k]} \ecd [\mathbbm{1}_{i = i^*(x)} \cdot \pi_{h_i}(x) \cdot h_i(x)]$.

\subsection{Ensembling Models}
In this method, we will require consistency of our predictions with respect to another special collection of conditioning events: 
the sets on which each constituent model is most ``optimistic"---or has the highest self-assessed payoff.

\begin{definition}[Maximum Model Level Sets] 
    Fix a set of $k$ models $h^t_i: \cX \to \cY$ for $i \in [k]$. We refer to the maximum model level sets  as 
    \[ \calC^t_{[k]} = \left\{ \{ x | h^t_i(x) \cdot \pi_{h^t_i}(x) \geq h^t_j(x) \cdot \pi_{h^t_j}(x) \ \forall \ j \in [k] \} : i \in [k] \right\}. \]
    These correspond to the sets of examples on which each of the models $i$ has the highest self-assessed payoff. 
\end{definition}

The procedure for ensembling $k$ models involves modifying the constituent models so 
each is consistent with respect to the level sets of its own induced policy conditional on the identity of the model with the highest self-evaluated payoff. Informally, consistency on this set of events is useful because they are related to how the decision maker takes actions -- the ensemble follows the action of policy $i$ exactly when it has the highest self-assessed payoff. If each constituent model's predictions are self-consistent and consistent conditional on these sets characterizing when the decision maker is taking different actions, the resulting ensembling satisfies strong outcome guarantees.

\begin{algorithm} 
\caption{White Box Ensembling}
\label{alg-max-ensembling}
    \begin{algorithmic}
        \STATE Select consistency tolerance $\alpha$ and discretization $\cT$
        \STATE Initialize $t = 0$
        \STATE {\bf Input} Initial models $h^t_i, i \in [k]$
        \WHILE{ $\exists \ i$ s.t. $\exists \ C \in \calC^{\cT}_{\pi_{h^t_i}} \times \calC^t_{[k]}$ s.t. $\Pr[x \in C] \|\ecd[y - h^t_i(x) | x \in C]\|_{\infty} > \alpha$}
            \item $h^{t+1}_i = $ \textsc{Update}($h^t_i, \calC^{\cT}_{\pi_{h^t_i}} \times \calC^t_{[k]} $), for each $i \in [k]$
            \item $t = t+1$
        \ENDWHILE
        \STATE {\bf Output} white box ensemble ${\bf h} = (h^t_1, \ldots, h^t_k)$
    \end{algorithmic}
\end{algorithm}

\subsection{Convergence of White Box Ensembling}

Convergence of the white box ensembling method follows similarly to convergence of Algorithm \ref{alg}.
Algorithm \ref{alg-max-ensembling} repeatedly calls Algorithm \ref{alg} as a subroutine on each of the $k$ constituent models, on an adaptively chosen sequence of conditioning events.

\begin{lemma} \label{lem:update-conv-adaptive}
    Fix a model $h^1: \cX \to \cY$ and consistency tolerance $\alpha$.
    Let $h^t = \textsc{Update}(h^{t-1}, \calC^{t-1})$ for $t > 1$.
    For any sequence of, possibly adaptive, conditioning events $\calC^1, \ldots, \calC^t$, this process will terminate after at most $\frac{dM^2}{\alpha^2}$ rounds: that is, for $t > \frac{dM^2}{\alpha^2}$, it is the case that $h^t = \textsc{Update}(h^{t-1}, \calC^{t-1}).$
\end{lemma}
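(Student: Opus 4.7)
The plan is to lift the squared-error potential argument behind Lemma \ref{lem:update-conv} to the adaptive setting by observing that Lemma \ref{lem:monotone-sq-error} provides a per-update decrease that is oblivious to how the triggering set was chosen. I would treat all iterations of every inner while loop---across every outer call $h^t = \textsc{Update}(h^{t-1},\calC^{t-1})$---as a single global sequence of primitive updates on the potential $\Phi(h) = \ecd[\|h(x)-y\|_2^2]$.

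For any primitive update on a set $C$ with $\Pr[x\in C]\,\|\ecd[y-h(x)\mid x\in C]\|_{\infty} > \alpha$, Lemma \ref{lem:monotone-sq-error} gives
\[
\Phi(h) - \Phi(h') \;=\; \Pr[x\in C]\cdot \|\Delta\|_2^2 \;\geq\; \Pr[x\in C]\cdot \|\Delta\|_{\infty}^2,
\]
and combining the while-loop condition (squared) with $\Pr[x\in C]\leq 1$ yields $\Pr[x\in C]\,\|\Delta\|_{\infty}^2 > \alpha^2$. Thus every primitive update decreases $\Phi$ by strictly more than $\alpha^2$. Since $\|y\|_{\infty}, \|h^t(x)\|_{\infty} \leq M$ implies $0 \leq \Phi \leq dM^2$ throughout, the total number of primitive updates is at most $dM^2/\alpha^2$; after that many updates no $C$ in the current $\calC^{t-1}$ can satisfy the while-loop condition, so \textsc{Update} returns its input unchanged, yielding the stated fixed point.

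The key observation that makes the adaptive case no harder than Lemma \ref{lem:update-conv} is that the per-step decrease of $\Phi$ depends only on the current set and the current model, not on the history that produced them; consequently the telescoping bound survives arbitrary, possibly data-dependent, choices of $\calC^1,\calC^2,\ldots$. The only mild obstacle is one of bookkeeping: one must spell out that ``rounds'' here refers to primitive while-loop iterations accumulated globally across all outer calls to \textsc{Update}, rather than to the outer index $t$ itself. Once this is made explicit, the result follows immediately from the same potential argument as Lemma \ref{lem:update-conv}, with no new ingredients needed.
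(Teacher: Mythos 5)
Your proof is correct and follows essentially the same route as the paper's: both rest on the observation that Lemma \ref{lem:monotone-sq-error} gives a per-update drop of more than $\alpha^2$ in the squared-error potential regardless of how the triggering set was (adaptively) chosen, so the bound $dM^2$ on the potential caps the total number of updates at $dM^2/\alpha^2$. The only difference is bookkeeping---you count primitive inner-loop iterations globally while the paper counts non-trivial outer invocations of \textsc{Update}---and your accounting is, if anything, the slightly more careful of the two.
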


\begin{lemma} \label{lem:white-box-conv}
    Algorithm \ref{alg-max-ensembling} terminates within $k \cdot \frac{dM^2}{\alpha^2}$ rounds. 
\end{lemma}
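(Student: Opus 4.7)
The plan is to apply Lemma \ref{lem:update-conv-adaptive} once per model and sum the resulting per-model bounds. In Algorithm \ref{alg-max-ensembling}, each model $h_i$ is debiased on a sequence of conditioning events $\calC^{\cT}_{\pi_{h^t_i}} \times \calC^t_{[k]}$ that evolves adaptively across outer iterations and depends on the current states of \emph{all} $k$ models, not just $h_i$ itself. Crucially, however, the squared-error potential $\ecd[\|h_i(x) - y\|_2^2]$ of a given model depends only on $h_i$'s own predictions: updates applied to other models $h_j$ with $j \neq i$ can alter which conditioning events $h_i$ faces on the next outer iteration, but they do not touch $h_i$'s predictions and therefore do not change $h_i$'s squared error. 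This means Lemma \ref{lem:update-conv-adaptive} applies verbatim, model by model, to the sequence of updates executed on each $h_i$.

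Given this, the per-model mechanics are exactly those of Lemma \ref{lem:update-conv}: whenever a pair $(i, C)$ triggers the while condition, we have $\Pr[x\in C]\,\|\ecd[y-h^t_i(x)\mid x\in C]\|_{\infty} > \alpha$, and combining this with $\|\Delta\|_2 \ge \|\Delta\|_\infty$ and Lemma \ref{lem:monotone-sq-error} forces the resulting update to reduce $\ecd[\|h_i(x)-y\|_2^2]$ by strictly more than $\alpha^2$. Since the initial squared error is at most $dM^2$ by the bound $\|y\|_\infty\le M$ and squared error is nonnegative, each individual model can be updated at most $dM^2/\alpha^2$ times. Summing over the $k$ models gives at most $k\cdot dM^2/\alpha^2$ debiasing steps in total, and hence the outer while loop terminates within this many rounds, since each outer iteration must perform at least one such step (otherwise the violation condition would already fail and the loop would exit).

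The only real subtlety to flag is the one above: that the adaptively evolving conditioning events couple all $k$ models together. This coupling is only through the choice of \emph{which} events to condition on, not through the model parameters themselves, so the per-model squared-error potential argument decouples cleanly. Everything else is bookkeeping on top of Lemma \ref{lem:update-conv-adaptive}, which was proved precisely to accommodate this kind of adaptive, externally-driven choice of conditioning events.
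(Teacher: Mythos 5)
Your proof is correct and follows essentially the same route as the paper: invoke Lemma \ref{lem:update-conv-adaptive} once per model (noting that each model's squared-error potential depends only on its own predictions, even though the conditioning events are adaptively coupled across models) and sum the $k$ per-model bounds of $dM^2/\alpha^2$. The extra care you take in justifying why the cross-model adaptivity does not break the per-model potential argument, and why each outer iteration performs at least one update step, is a welcome elaboration of details the paper leaves implicit.
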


The proofs of Lemma \ref{lem:update-conv-adaptive} and \ref{lem:white-box-conv} are deferred to Appendix \ref{ap:proofs}.

\subsection{Utility Guarantees}

We now analyze our white box ensembling method. First, we prove that it is self-consistent --- its self-assessed payoff is equal (up to error terms) to its realized payoff, in expectation. 

\begin{lemma} \label{lem:ensemble-consistent}
   Fix distribution $\cD$ and bucketing $\cT$, with $w = \sqrt{\alpha k / M}$.
   Let {\bf h} be an ensemble of $k$ models in which each $h_i$ is $\alpha-$consistent with respect to $\calC^{\cT}_{\pi_{h_i}} \times \calC_{[k]}$ for $i \in [k]$. 
   The expected self-evaluation of the ensemble {\bf h} is approximately equal to its expected revenue: $\ecd[\pi_{\bf h}(x) \cdot y] \geq \ecd[\pi_{\bf h}(x) \cdot {\bf h}(x)] - 2d \sqrt{\alpha k M}$.
\end{lemma}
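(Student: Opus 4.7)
The plan is to mimic the proof of Lemma \ref{lem:consistent}, but applied separately to each of the $k$ ``slices'' of the data domain on which a different constituent model is the self-assessed maximizer. The key insight is that, by definition, the ensemble $\pi_{\bf h}$ acts identically to $\pi_{h_i}$ on the set $C_i = \{x : i^*(x) = i\} \in \calC_{[k]}$, and the consistency hypothesis gives us bias control on exactly the events we need: the intersections of the policy level sets of $\pi_{h_i}$ with $C_i$.

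First, I would decompose
\[
\ecd[\pi_{\bf h}(x) \cdot y] \;=\; \sum_{i \in [k]} \ecd\!\left[\mathbbm{1}_{i^*(x) = i}\, \pi_{h_i}(x) \cdot y\right],
\qquad
\ecd[\pi_{\bf h}(x) \cdot {\bf h}(x)] \;=\; \sum_{i \in [k]} \ecd\!\left[\mathbbm{1}_{i^*(x) = i}\, \pi_{h_i}(x) \cdot h_i(x)\right],
\]
and prove that for each $i \in [k]$ separately,
\[
\ecd\!\left[\mathbbm{1}_{i^*(x) = i}\, \pi_{h_i}(x) \cdot y\right] \;\geq\; \ecd\!\left[\mathbbm{1}_{i^*(x) = i}\, \pi_{h_i}(x) \cdot h_i(x)\right] \;-\; \frac{\alpha d}{w} \;-\; wMd \cdot \Pr[i^*(x)=i].
\]
Summing over $i$ then yields $\ecd[\pi_{\bf h}(x) \cdot y] \geq \ecd[\pi_{\bf h}(x) \cdot {\bf h}(x)] - \tfrac{k\alpha d}{w} - wMd$, and plugging in $w = \sqrt{\alpha k / M}$ gives the claimed $2d\sqrt{\alpha k M}$ bound.

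To prove the per-$i$ inequality, I would run the exact calculation from Lemma \ref{lem:consistent}, but with every expectation and probability taken on the slice $\{i^*(x) = i\}$. That is, for each coordinate $j \in [d]$, I partition the slice by the bucket $\tau \in \cT$ containing $\pi_{h_i}(x)_j$, replace $\pi_{h_i}(x)_j$ by the midpoint $\tau_1$ at cost $\tfrac{wM}{2}$ per coordinate (from $\|y\|_\infty \leq M$), then apply the hypothesized consistency with respect to $\calC^{\cT}_{\pi_{h_i}} \times \calC_{[k]}$ on the event $\{\pi_{h_i}(x)_j \in \tau\} \cap \{i^*(x) = i\}$ to replace $y_j$ by $h_i(x)_j$ at additive cost $\alpha$ per $(j, \tau)$ pair (the probability cancels). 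Replacing $\tau_1$ back by $\pi_{h_i}(x)_j$ costs another $\tfrac{wM}{2}$ per coordinate, and the $\alpha$ errors accumulate to $\alpha |\cT| d = \alpha d /w$ on this slice. The discretization errors sum to $wMd \cdot \Pr[i^*(x) = i]$ (the probabilities of the buckets within the slice sum to $\Pr[i^*(x)=i]$), which is the reason only a single overall $wMd$ term appears after summing over $i$.

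The only real subtlety beyond Lemma \ref{lem:consistent} is ensuring the bias hypothesis applies on the product of the policy level sets with $\calC_{[k]}$, which is exactly the collection Algorithm \ref{alg-max-ensembling} enforces consistency over; and tracking how the number of models $k$ enters, which appears as a multiplicative $k$ in front of the $\alpha d/w$ term and is absorbed by the choice $w = \sqrt{\alpha k/M}$. I expect this bookkeeping, rather than any conceptual step, to be the only place care is required.
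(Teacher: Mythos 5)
Your proposal is correct and follows essentially the same route as the paper's own proof: decompose over the argmax slices $\{i^*(x)=i\}$, bucket each coordinate of $\pi_{h_i}$, pay $\tfrac{wM}{2}$ twice for the midpoint swaps, apply consistency on the events in $\calC^{\cT}_{\pi_{h_i}} \times \calC_{[k]}$ to trade $y$ for $h_i(x)$ at cost $\alpha$ per $(i,j,\tau)$, and balance the resulting $\tfrac{k\alpha d}{w} + wMd$ with $w=\sqrt{\alpha k/M}$. Your error bookkeeping matches the paper's exactly.
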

The proof of Lemma \ref{lem:ensemble-consistent} is deferred to Appendix \ref{ap:proofs}.

We next prove two bounds on the performance of the method. The first states that---up to error terms--- the payoff of the ensemble is equal to the \emph{expected maximum} self assessed payoff of each of the constituent models. Notice that here we bound the performance of the ensemble by the expected max, which is larger than the max expectation, which corresponds to the best single constituent model. 

\begin{lemma} \label{lem:white-box-best}
Fix distribution $\cD$ and bucketing $\cT$, with $w = \sqrt{\alpha k / M}$.
Let {\bf h} be an ensemble of $k$ models in which $h_i$ is $\alpha-$consistent with respect to $\calC^{\cT}_{\pi_{h_i}} \times \calC_{[k]}$ for $i \in [k]$. 
Then,
\[ \ecd[\pi_{\bf h}(x) \cdot y] \geq \ecd[  \max_{j \in [k]} \pi_j(x) \cdot h_j(x)] - 2d \sqrt{\alpha k M}. \]
\end{lemma}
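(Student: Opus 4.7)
The plan is to reduce this lemma directly to Lemma \ref{lem:ensemble-consistent} by observing that the ensemble's self-evaluation is, by construction, exactly the pointwise maximum self-assessed payoff of the constituent models. Concretely, since $\pi_{\bf h}(x) = \pi_{h_{i^*(x)}}(x)$ with $i^*(x) = \arg\max_{j \in [k]} \pi_{h_j}(x) \cdot h_j(x)$, I would first write
\[ \pi_{\bf h}(x) \cdot {\bf h}(x) = \pi_{h_{i^*(x)}}(x) \cdot h_{i^*(x)}(x) = \max_{j \in [k]} \pi_{h_j}(x) \cdot h_j(x), \]
which holds pointwise (not just in expectation). Taking expectations gives $\ecd[\pi_{\bf h}(x) \cdot {\bf h}(x)] = \ecd[\max_{j \in [k]} \pi_{h_j}(x) \cdot h_j(x)]$.

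Next, I would invoke Lemma \ref{lem:ensemble-consistent}, whose hypothesis is exactly the consistency assumption stated in the current lemma, to conclude
\[ \ecd[\pi_{\bf h}(x) \cdot y] \geq \ecd[\pi_{\bf h}(x) \cdot {\bf h}(x)] - 2d\sqrt{\alpha k M}. \]
Chaining these two relations yields the desired bound
\[ \ecd[\pi_{\bf h}(x) \cdot y] \geq \ecd[\max_{j \in [k]} \pi_{h_j}(x) \cdot h_j(x)] - 2d\sqrt{\alpha k M}. \]

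There is essentially no obstacle here: the lemma is a one-line consequence of the definition of $\pi_{\bf h}$ together with the already-established self-consistency of the ensemble (Lemma \ref{lem:ensemble-consistent}). The only thing worth double-checking is that the notation $\pi_j(x)$ appearing in the statement of this lemma is shorthand for $\pi_{h_j}(x)$, so that the max on the right-hand side really is the same quantity that $i^*(x)$ maximizes; under that reading the proof is immediate. All of the substantive analytic work (the triangle inequality manipulations, the level-set discretization, the choice $w = \sqrt{\alpha k / M}$, and the application of the consistency hypothesis) was already carried out in the proof of Lemma \ref{lem:ensemble-consistent}, so nothing new needs to be done.
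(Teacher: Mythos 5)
Your proposal is correct and matches the paper's own proof: both chain Lemma \ref{lem:ensemble-consistent} with the observation that, by Definition \ref{def:ensemble-policy}, the ensemble's self-evaluation equals the pointwise maximum self-assessment of the constituents. Your reading of $\pi_j(x)$ as $\pi_{h_j}(x)$ is the intended one.
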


The next performance bound is a ``swap-regret'' like guarantee. It states that on the subset of examples on which the model chooses to follow policy $i$, it could not have improved by instead following some other policy $j$ --- simultaneously for all $i$ and $j$.  

\begin{lemma} \label{lem:white-box-swap}
    Fix distribution $\cD$ and bucketing $\cT$, with $w = \sqrt{\alpha k / M}$.
    Let {\bf h} be an ensemble of $k$ models in which $h_i$ is $\alpha-$consistent with respect to $\calC^{\cT}_{\pi_{h_i}} \times \calC_{[k]}$ for $i \in [k]$.
    Then, for all $\phi: [k] \to [k]$,
    
    \[ \ecd[\pi_{\bf h}(x) \cdot y] \geq \ecd[\pi_{h_{\phi(i^*(x))} } (x) \cdot y] - 4d \sqrt{\alpha k M}. \]
\end{lemma}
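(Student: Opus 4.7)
The plan is to chain Lemma~\ref{lem:ensemble-consistent} with its natural analogue for the ``swap'' policy $\pi'(x):=\pi_{h_{\phi(i^*(x))}}(x)$, joined by a pointwise inequality that is essentially free from the definition of $i^*$. Concretely, I would first invoke Lemma~\ref{lem:ensemble-consistent} to replace the true payoff of the ensemble by its self-assessment:
\[
\ecd[\pi_{\bf h}(x)\cdot y]\ \geq\ \sum_{i\in[k]} \ecd\!\left[\mathbbm{1}_{i^*(x)=i}\,\pi_{h_i}(x)\cdot h_i(x)\right]-2d\sqrt{\alpha k M}.
\]
The pointwise step is then immediate: on the event $\{i^*(x)=i\}$, by the very definition of $i^*$ as the arg-max of self-assessed payoff across all $k$ models, $\pi_{h_i}(x)\cdot h_i(x)\geq \pi_{h_{\phi(i)}}(x)\cdot h_{\phi(i)}(x)$. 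Taking expectations and summing over $i$ lower-bounds the display above by $\sum_{i}\ecd[\mathbbm{1}_{i^*(x)=i}\,\pi_{h_{\phi(i)}}(x)\cdot h_{\phi(i)}(x)]-2d\sqrt{\alpha k M}$.

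It then remains to undo the self-assessment on the swap side, that is, to show
$\sum_{i}\ecd[\mathbbm{1}_{i^*(x)=i}\,\pi_{h_{\phi(i)}}(x)\cdot h_{\phi(i)}(x)] \leq \ecd[\pi'(x)\cdot y]+2d\sqrt{\alpha k M}$. This is \emph{not} a direct invocation of Lemma~\ref{lem:ensemble-consistent}, because $\phi\circ i^*$ is in general not the arg-max-self-assessment map. Instead, the proof template of Lemma~\ref{lem:consistent} applies \emph{verbatim} inside each cell $\{i^*(x)=i\}$: approximate each coordinate $\pi_{h_{\phi(i)}}(x)_j$ by the midpoint $\tau_1$ of its containing bucket $\tau\in\cT$, and then replace $h_{\phi(i)}(x)_j$ by $y_j$ using the consistency condition on the set $\{\pi_{h_{\phi(i)}}(x)_j\in\tau\}\cap\{i^*(x)=i\}$. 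The observation that validates this step is that $\{i^*(x)=i\}\in\calC_{[k]}$ for every $i\in[k]$, so by hypothesis $h_{\phi(i)}$ is $\alpha$-consistent on this set regardless of what $\phi(i)$ happens to equal; consistency of each $h_{\phi(i)}$ with respect to its own policy level sets is exactly what the algorithm's termination condition supplies.

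The main obstacle, and where most of the technical work sits, is the error bookkeeping for this last conversion, which parallels the argument behind Lemma~\ref{lem:ensemble-consistent}. With bucket width $w=\sqrt{\alpha k/M}$, the discretization error at cell $i$ is weighted by $\Pr[i^*(x)=i]$ and the weights sum telescopically across $i$ (since the cells partition $\cX$) to yield total discretization error at most $wMd$, while the consistency condition contributes at most $\alpha d/w$ per cell and hence $\alpha dk/w$ overall; substituting $w=\sqrt{\alpha k/M}$ balances both at order $d\sqrt{\alpha k M}$. Adding this $2d\sqrt{\alpha k M}$ error to the $2d\sqrt{\alpha k M}$ already consumed by the first application of Lemma~\ref{lem:ensemble-consistent} yields the stated $4d\sqrt{\alpha k M}$ slack, and the lemma follows.
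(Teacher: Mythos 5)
Your proposal is correct and follows essentially the same three-step chain as the paper's proof: apply Lemma~\ref{lem:ensemble-consistent}, use the pointwise arg-max property of $i^*$ to pass to the swapped self-assessment, and then invoke the cross-consistency of $h_{\phi(i)}$ on the cells $\{i^*(x)=i\}$ to convert back to realized payoff with another $2d\sqrt{\alpha k M}$ of slack. You in fact spell out the third step (which the paper only asserts) in more detail, and your error bookkeeping matches the paper's constants.
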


The proofs of both Lemmas \ref{lem:white-box-best} and \ref{lem:white-box-swap} are deferred to Appendix \ref{ap:proofs}.

\section{Black Box Ensembling Method}
\label{sec:single-model}

In this section, we describe the second, ``black box," method to ensemble models. 
This method involves maintaining a single, deterministic predictor which can be easily updated in the presence of new information, and requires only access to the induced policy of the models being ensembled.
Like the method described in Section \ref{sec:multiple-models}, the black box ensembling method enjoys transparent outcome guarantees. 
We show that this ensembling provides a ``swap style" utility guarantee, that the ensemble provably induces a payoff as high as any of the policies it is consistent to, conditioned on its action.

\paragraph{Interaction Model}
A decision maker has a predictive model $h$ and access to $k$ arbitrary policies $\pi_1, \ldots, \pi_k,$ whose information they want to incorporate into their predictive model.
The decision maker builds this ensemble by updating their model to satisfy consistency conditions relating to each policy $\pi_i$ which we describe below.
Since this procedure only involves updating a single set of predictions, the one that the decision maker begins with, they are able to ensemble policies generated arbitrarily - e.g. even ones without an underlying predictive model.

\subsection{Ensembling Policies}

This method involves maintaining a model that is unbiased with respect to the policy it itself induces, as well as each of the constituent policies to be ensembled. 

\begin{algorithm}[H]
\caption{Black Box Ensembling}
\label{alg-ensembling}
    \begin{algorithmic}
        \STATE {\bf Input} collection of $k$ policies $\{\pi_1, \ldots, \pi_k \}$
        \STATE Select consistency tolerance $\alpha$ and discretization $\cT$
        \STATE Initialize $t = 0$ and model $h^t$
        \WHILE{  $\exists \ C \in  \calC^{\cT}_{\pi_{h^t}} \cup_{i \in [k]} \calC^{\cT}_{\pi_i} $ s.t. $\Pr[x \in C] \|\ecd[y - h^t(x) | x \in C]\|_{\infty} > \alpha$ }
        \item $h^{t+1} = $ \textsc{Update}$(h^t, \calC^{\cT}_{\pi_{h^t}} \cup_{i \in [k]} \calC^{\cT}_{\pi_i} )$
        \item $t = t+1$
        \ENDWHILE
        \STATE {\bf Output} model $h^t$
    \end{algorithmic}
\end{algorithm}

\subsection{Utility Guarantees}

We now state and prove our main utility statement for our black box ensembling method: A swap-regret style guaranee that establishes that the policy induced by the ensemble model performs better than any of its constituent policies, not just overall, but also conditional on level sets of any policy in its ensemble. 

\begin{lemma} \label{lem:black-box-utility}
    Fix distribution $\cD$ and bucketing $\cT$.
    Let $h$ be an $\alpha-$self-consistent model that is $\alpha-$consistent with respect to a collection of policies $\cP$.
    Then, $\mathbb{E}_{\cD}[\pi_h(x) \cdot y | x \in C] \geq \mathbb{E}_{\cD} [ \pi(x) \cdot y | x \in C]- \frac{2 \alpha d}{\Pr[\pi(x)_i \in \tau] } - 2wMd ,$ for all $\pi, \pi' \in \cP \cup \{\pi_h\}$ and all $C \in \calC^{\cT}_{\pi'}$.
\end{lemma}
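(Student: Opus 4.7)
The plan is to mirror Lemma \ref{lem:single-consistent} but apply it conditionally on $x \in C$. I would set up the chain
$$\E[\pi_h(x)\cdot y \mid x\in C] \;\approx\; \E[\pi_h(x)\cdot h(x) \mid x\in C] \;\geq\; \E[\pi(x)\cdot h(x) \mid x\in C] \;\approx\; \E[\pi(x)\cdot y \mid x\in C],$$
and then chain the errors. The middle inequality is the only easy step: since $\pi_h(x) = \arg\max_{a\in\Omega} a\cdot h(x)$ holds pointwise, $\pi_h(x)\cdot h(x) \geq \pi(x)\cdot h(x)$ for every $x$, and this survives conditioning on $x\in C$.

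For each of the two $\approx$ steps, I would prove a conditional analogue of Lemma \ref{lem:consistent}: for any $\rho \in \cP \cup \{\pi_h\}$,
$$\bigl|\E[\rho(x)\cdot (y - h(x)) \mid x\in C]\bigr| \;\leq\; \frac{\alpha d}{\Pr[x\in C]} + wMd.$$
Expand this coordinate-wise and handle the $j$-th term as in the proof of Lemma \ref{lem:consistent}: replace $\rho(x)_j$ by the midpoint $\tilde\rho(x)_j$ of its bucket $\sigma \in \cT$, incurring error at most $(w/2)\cdot 2M = wM$ per coordinate since $|y_j - h(x)_j|\leq 2M$. This contributes the $wMd$ term. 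Next expand $\tilde\rho(x)_j = \sum_\sigma \sigma_1 \mathbbm{1}[\rho(x)_j \in \sigma]$ and invoke the consistency assumptions. The crucial input is that $C \in \calC^\cT_{\pi'}$ with $\pi' \in \cP \cup \{\pi_h\}$, so the assumption that $h$ is $\alpha$-consistent with $\pi'$ yields the marginal bound $|\E[y_j - h(x)_j \mid x\in C]| \leq \alpha/\Pr[x\in C]$. Summing over coordinates $j$ gives the $\alpha d/\Pr[x\in C]$ contribution, and adding the two $\approx$ errors in the chain produces the stated $\tfrac{2\alpha d}{\Pr[x\in C]} + 2wMd$ bound.

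The main obstacle is the conditional consistency step: Algorithm~\ref{alg-ensembling} only guarantees consistency on single policy level sets, not on intersections such as $\{\rho(x)_j \in \sigma\} \cap C$, yet the bucket decomposition naturally produces such intersections. I would resolve this by noting that after replacing $\rho(x)_j$ by $\tilde\rho(x)_j$, summing the bucket terms $\sum_\sigma \sigma_1 \E[(y_j-h(x)_j)\mathbbm{1}[\rho(x)_j\in\sigma]\mid x\in C]$ can be controlled via $|\tilde\rho(x)_j|\leq 1$ together with the consistency of $h$ on $C$ itself (a single level set of $\pi'$, which the algorithm does ensure). Getting the constants to line up to exactly $\alpha/\Pr[x\in C]$ per coordinate, rather than a blow-up by $|\cT| = 1/w$, is the delicate bookkeeping step; once it is carried out, the rest of the proof is an immediate three-line chain.
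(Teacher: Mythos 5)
Your overall chain ($\E[\pi_h\cdot y\mid C]\approx\E[\pi_h\cdot h\mid C]\ge\E[\pi\cdot h\mid C]\approx\E[\pi\cdot y\mid C]$) is exactly the structure of the paper's proof, and you correctly isolate the real difficulty: Algorithm \ref{alg-ensembling} only enforces consistency on single policy level sets, so the bucket decomposition inside the conditioning event $C$ produces intersections $\{\rho(x)_j\in\sigma\}\cap C$ on which no consistency guarantee is available. The problem is that your proposed resolution of this difficulty does not work. You claim that
\[
\sum_{\sigma}\sigma_1\,\E\bigl[(y_j-h(x)_j)\1[\rho(x)_j\in\sigma]\mid x\in C\bigr]=\E\bigl[\tilde\rho(x)_j\,(y_j-h(x)_j)\mid x\in C\bigr]
\]
can be controlled by $|\tilde\rho(x)_j|\le 1$ together with the single bound $\|\E[y-h(x)\mid x\in C]\|_\infty\le\alpha/\Pr[x\in C]$. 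That inference is false: $\tilde\rho(x)_j$ is a random variable that can be correlated with the residual $y_j-h(x)_j$, and $|\E[XZ]|\le\|X\|_\infty\,|\E[Z]|$ does not hold for correlated $X$ and $Z$. Concretely, if on $C$ the residual is $+1$ on half the mass and $-1$ on the other half while $\tilde\rho(x)_j$ equals $1$ exactly where the residual is positive and $0$ elsewhere, then $\E[y_j-h(x)_j\mid x\in C]=0$ yet $\E[\tilde\rho(x)_j(y_j-h(x)_j)\mid x\in C]=1/2$. Because the weights $\sigma_1$ differ across buckets, the sum does not telescope into a multiple of $\E[y_j-h(x)_j\mid x\in C]$; this is not delicate bookkeeping but a genuine gap.

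The paper's proof sidesteps the intersection problem by a different device: conditional on $C=\{\pi(x)_i\in\tau\}$ it replaces the policy coordinates by the \emph{deterministic} midpoint $\tau_1$ of the conditioning bucket before invoking consistency, so that the factor multiplying the residual is a constant given $C$ and can legitimately be pulled out of the conditional expectation, leaving only $\|\E[y-h(x)\mid x\in C]\|_\infty\le\alpha/\Pr[x\in C]$. (Whether that replacement is fully justified for coordinates $j\ne i$, where $|\tau_1-\pi(x)_j|$ need not be at most $w/2$, is a separate question about the paper's own argument; the point is that its mechanism is to make the weight deterministic on the conditioning event, not to bound a correlated product.) To repair your proof you would either need to adopt that device, or strengthen the consistency class to include the product events $\{\rho(x)_j\in\sigma\}\cap C$, analogously to how the white box method conditions on $\calC^{\cT}_{\pi_{h_i}}\times\calC_{[k]}$.
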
 

The proof of Lemma \ref{lem:black-box-utility} is deferred to Appendix \ref{ap:proofs}.

\section{Experimental Evaluation} \label{sec:experimental}

We empirically evaluate both of our ensembling methods across four experimental setups with synthetic data. The two main results of our experimental section are as follows: 
\begin{itemize}
    \item Both ensembling methods yield decisions which improve on the outcomes from the initial constituent policies.
    \item The white box ensembling method tends to yield better outcomes than the black box ensembling method, at the expense of higher runtime in practice.
\end{itemize}

\begin{table}[H]
\begin{center}
\begin{tabular}{c | c c }
Experiment & Initial model specialization & Optimization Problem \\
\hline\hline
 \textbf{A} & Coordinate-wise & Covariance-constrained\\ 
 \textbf{B} & Group-wise & Covariance-constrained \\  
 \textbf{C} & Coordinate-wise & Linearly-constrained \\
 \textbf{D} & Group-wise & Linearly-constrained
\end{tabular}
\caption{Experiment Design}
\label{table:exp}
\end{center}
\end{table}

\subsection{Design Details}

Four different experiments to compare the white box and black box algorithms' performance were run on synthetic data. In each, the same data were used, but the set of initial models ingested by the algorithms and the downstream optimization problem their policies used differed. We list the different experiment's settings in Table \ref{table:exp}, and detail the specifics of their model design and optimization problems below. 

\paragraph{Synthetic Data Generation:} Experiments were run on synthetic data with 20 features and 4-dimensional labels. The feature data are multivariate normally distributed with a randomly generated covariance matrix, with a final categorical feature which uniformly randomly assigned datapoints to one of 5 categories. The 4-dimensional labels had a noisy linear relationship with the features. 

\paragraph{Initial Models:} In evaluating our two algorithms, we modify the types of initial models ensembled. By set of initial models, we refer to the set of models $h_1^0, \ldots, h_k^0$ input to the white box Algorithm \ref{alg-max-ensembling}. The black box Algorithm \ref{alg-ensembling} takes policies $\pi_1, \ldots, \pi_k$ as input, which we take to be the policies induced by $h_1, \ldots, h_k$, i.e. $\pi_{h_1}(x), \ldots, \pi_{h_k}(x)$ where $\pi_h(x) = \arg\max_{a \in \Omega} a \cdot h(x)$ subject to the particular problem's constraints. The black box algorithm also requires an initial model $h^0$, which we initialize to be a naive model which predicts the label mean for all points. 

Ensembling is most interesting if the constituent models have different strengths, as they might if for example they were generated with differing data access. Here, initial models were all gradient boosted regression trees of depth 6 with learning rate 0.1, but were constructed to specialize in different ways. Models which specialized \textit{coordinate-wise} were each trained on only a single coordinate of the multivariate regression problem: on the coordinate the model specialized in, it predicted according to its gradient boosting regressor, and in the other coordinates it predicted the label mean, as shown in Equation \ref{eq:coord}:

\begin{equation}
h_i(x) = (\bar{y}_0, \ldots, \texttt{GradientBoostRegressor}(x), \ldots, \bar{y}_d).
\label{eq:coord}
\end{equation}

In experiments A and C, four such initial models were formed, one for each coordinate of the label. 

In experiments B and D, models specialized \textit{group-wise}. As discussed above, the features of the data included a categorical variable, which we then used to subset the data for these models. Note this might correspond in a real-world setting to generating a series of models where each had access to different subpopulations. Each of the five models generated specialized on a different subpopulation: as shown in Equation \ref{eq:group}, for that subpopulation, it predicted using a gradient boosted regression (one trained per coordinate, as standard gradient boosting regression tree implementations do not handle multivariate labels), and otherwise predicted the label mean.

\begin{equation}
h_g(x) = 
\begin{cases}
    (\texttt{GradientBoostRegressor}(x)_1, \ldots, \texttt{GradientBoostRegressor}(x)_d) & \text{if} \quad  x \in g, \\
    \bar{\mathbf{y}} & \text{else.}
\end{cases}
\label{eq:group}
\end{equation}

For experiments B and D, there were 5 such initial models generated, one for each subgroup.

For all experiments, the initial training was done using 10,000 datapoints drawn from the synthetic distribution.

\paragraph{Downstream Optimization Problem:} We considered two different forms of optimization problem: one which constrained the policies by the covariance of the labels, similar to mean-variance optimization in portfolio design, and a second which was a set of simple linear constraints. The covariance constrained optimization is shown here in Figure \ref{fig:cov-const}. The linearly constrained optimization, which simply bounds the amount of weight on sets of coordinates, is shown in Figure \ref{fig:lin-const}.

\begin{figure}[h]
    \centering
    \begin{minipage}{0.9\textwidth}
        \begin{align*}
            &\text{maximize} && \boldsymbol{\pi} \cdot \mathbf{y} \\
            &\text{subject to} && \sum_{j=1}^n \pi_j = 1, \\
            & && 0 \leq \pi_j \leq 1 \quad \forall j \in \{1, 2, \ldots, n\},\\
            & && \boldsymbol{\pi}^T \mathbf{C} \boldsymbol{\pi} \leq \alpha
        \end{align*}
    \end{minipage}
    \caption{Covariance-constrained policy. Here, $\boldsymbol{\pi}$ is the vector of decision variables (i.e. the induced policy), where each term is constrained to be a probability. In the final constraint, $\mathbf{C}$ is the covariance matrix of the true labels, and the quadratic constraint is bounded by a value $\alpha$ which was chosen appropriately to the scale of $\mathbf{C}$.}
    \label{fig:cov-const}
\end{figure}

\begin{figure}[h]
    \centering
    \begin{minipage}{0.9\textwidth}
        \begin{align*}
            &\text{maximize} && \boldsymbol{\pi} \cdot \mathbf{y} \\
            & \text{subject to} && 0 \leq \pi_j \leq 1 \quad \forall j \in \{1, 2, \ldots, n\},\\
            & && \pi_0 + \pi_1 \le \alpha \\
            & && \pi_1 + \pi_2 \le \beta \\
        \end{align*}
    \end{minipage}
    \caption{Linearly-constrained policy. The constraints $\alpha$ and $\beta$ were chosen to be 0.5 and 0.6 respectively for the purposes of our experiments.}
    \label{fig:lin-const}
\end{figure}

\paragraph{Debiasing Algorithm Implementation:} For all experiments, the debiasing algorithms were run on a subset of 400 synthetically generated datapoints. This data split was primarily chosen due to the expense of running on a larger sample, as optimizations have to be run sample-by-sample. All experiments were run on a local 32-core machine, but without any parallelization to leverage the multiple cores. Gurobi was used to solve the optimizations, and the initial models were trained using scikit-learn (\cite{gurobi, scikit-learn}). 

\paragraph{Source Code:} The specific data generation process and all source code may be found at \url{https://github.com/globusharris/ensembling-constrained-optimization}

\subsection{Evaluation of Performance} In Figure \ref{fig:main-exp}, we examine the performance of the two algorithms across the four experimental setups. In all experiments, we see that both algorithms have their predicted payoff converge to their realized payoffs, and that both achieve higher payoff than the highest payoff realized by the policies induced by the initial models. As discussed in Section \ref{sec:multiple-models}, the theory for the white box algorithm does not guarantee that the final ensemble outperform the initial models, but in practice we empirically see that it  does, and by substantial margins. 

In particular, in experiments A, B, and C, the white box algorithm converges to the \textit{best possible} payoff. In experiment D it very nearly does. The black box method, however, does not always converge to the best possible model; in particular in experiment A it only marginally improves compared to the initial models, while the white box model converges to optimal.

\begin{figure}[H]
    \centering
    \includegraphics[width=\textwidth]{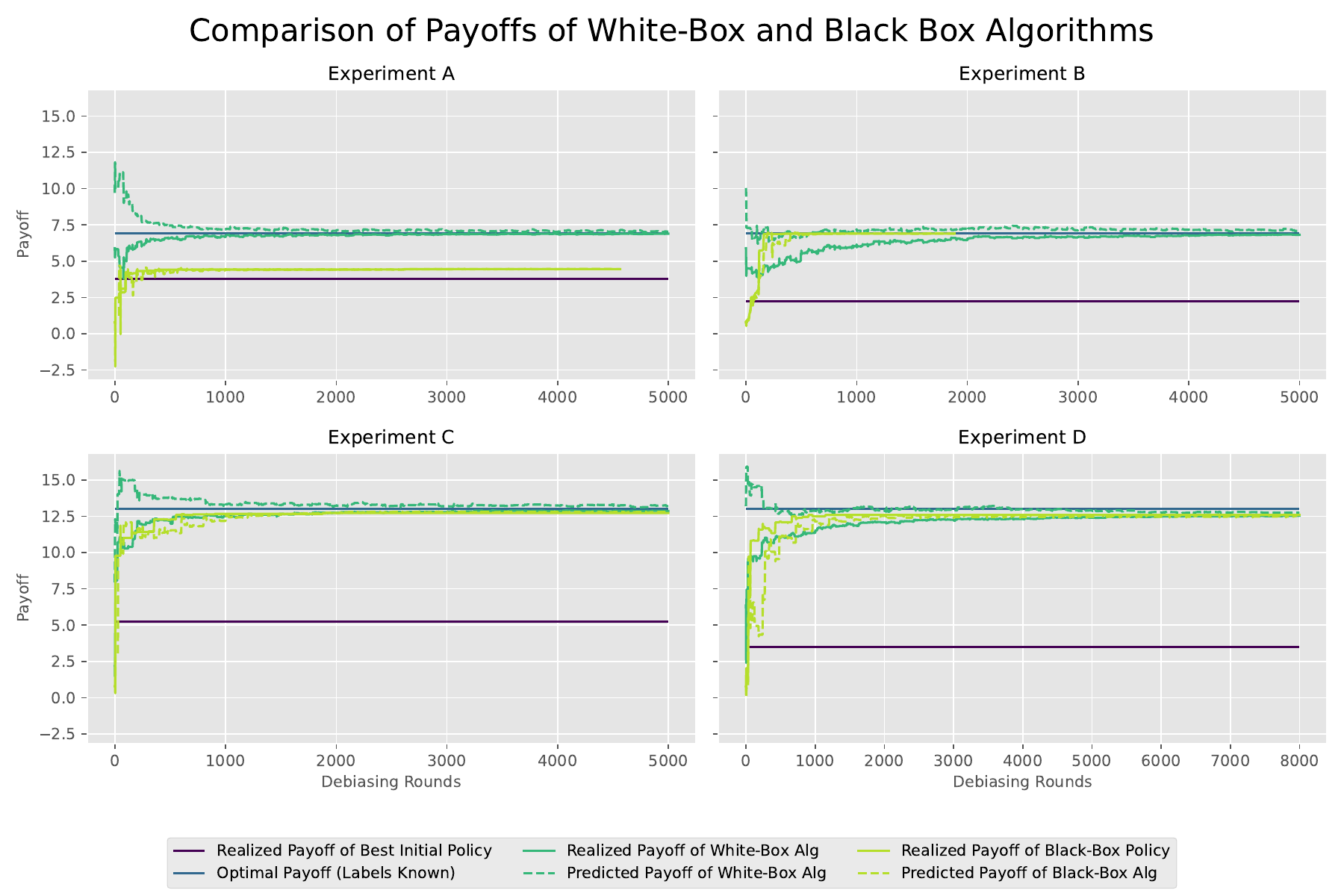}
    \caption{Comparison of payoffs of white box and black box algorithms in each experiment. Dashed lines correspond to predicted payoff, while solid green and teal are realized payoff.}
    \label{fig:main-exp}
\end{figure}

\paragraph{Behavior of Component Models for White Box Ensemble} Not captured in Figure \ref{fig:main-exp} is the dynamics of the white box ensembling method. Are all of the models actually updated during the debiasing procedure, and does the final ensemble actually select amongst multiple policies? As we see in Figure \ref{fig:wb-component}, in our experiments all of the constituent models are significantly updated in the ensembling process, leading to final policies that in expectation lead to roughly the same payoff as each other (all significantly improved over the initial policies). 

This shows that all of the constituent policies are updated to have roughly the same expected payoff. But how similar are their actions round by round? We see in Figure \ref{fig:policy-choice} that in experiments A and B, the ensemble selects amongst all of its constituent policies with roughly equal proportion, while in experiments C and D, it primarily chooses the first policy. However, for all experiments, we see in Figure \ref{fig:policy-var} that the constituent policies do not substantially vary in their choices after the initial rounds of debiasing, so the fact that experiments C and D choose policy 0 more often than the others in the final rounds of debiasing does not materially mean that its outcome is substantially different than if it were to ensemble using a different procedure. This is consistent with the use of similar techniques by \cite{stevenetal} for the explicit purpose of resolving disagreements in the actions induced by predictive models. 

\begin{figure}[H]
    \includegraphics[width=\textwidth]{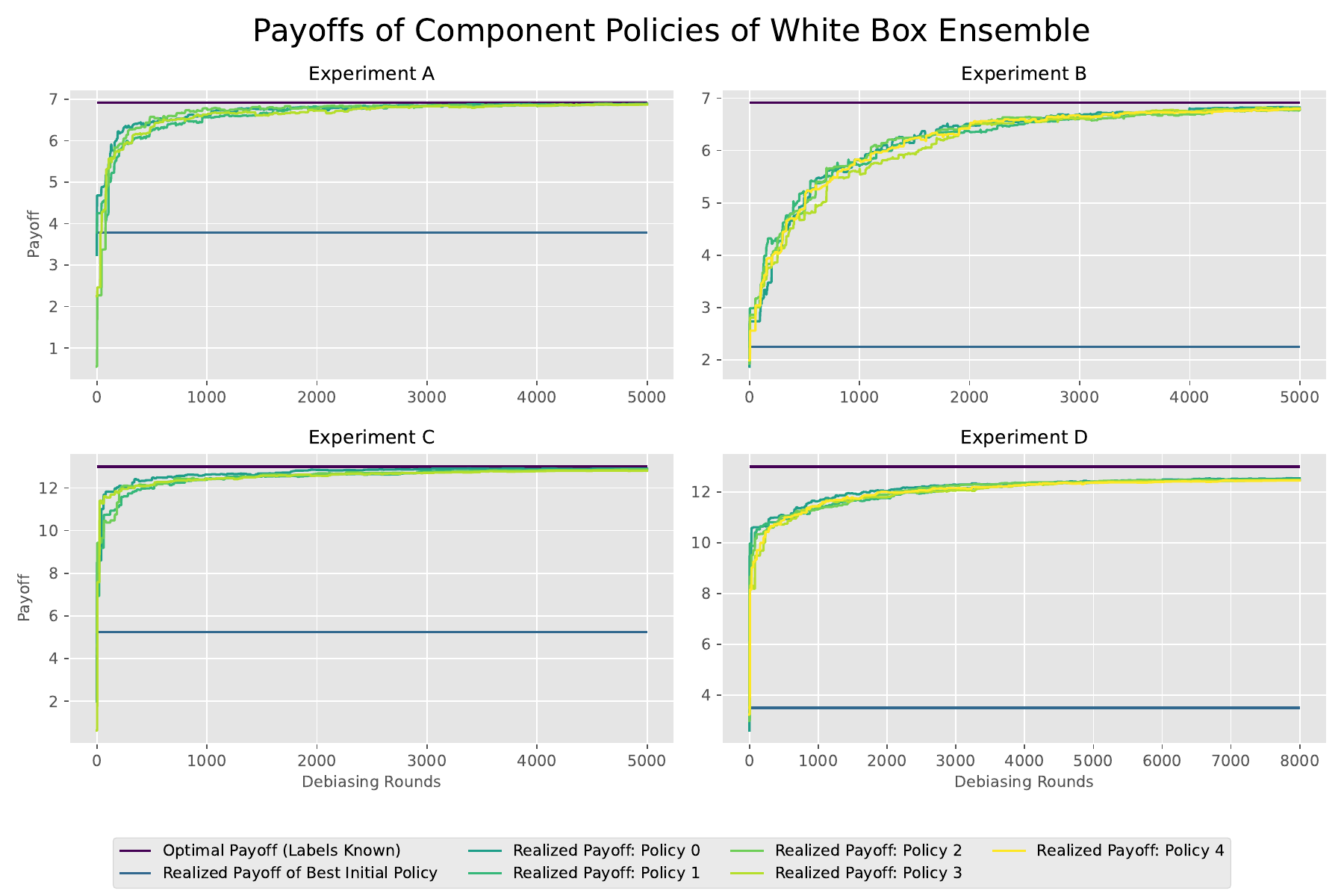}
    \caption{Realized payoffs of the component policies of the white box ensemble over round of debiaising. Note that experiments A and C have 4 initial models and accompanying induced policies (one specializing in each coordinate of the prediction) while experiments B and D have 5 initial models (one specializing in each subgroup of the dataset).}
    \label{fig:wb-component}
\end{figure}

\begin{figure}[H]
    \begin{centering}
    \includegraphics[width=0.9\textwidth]{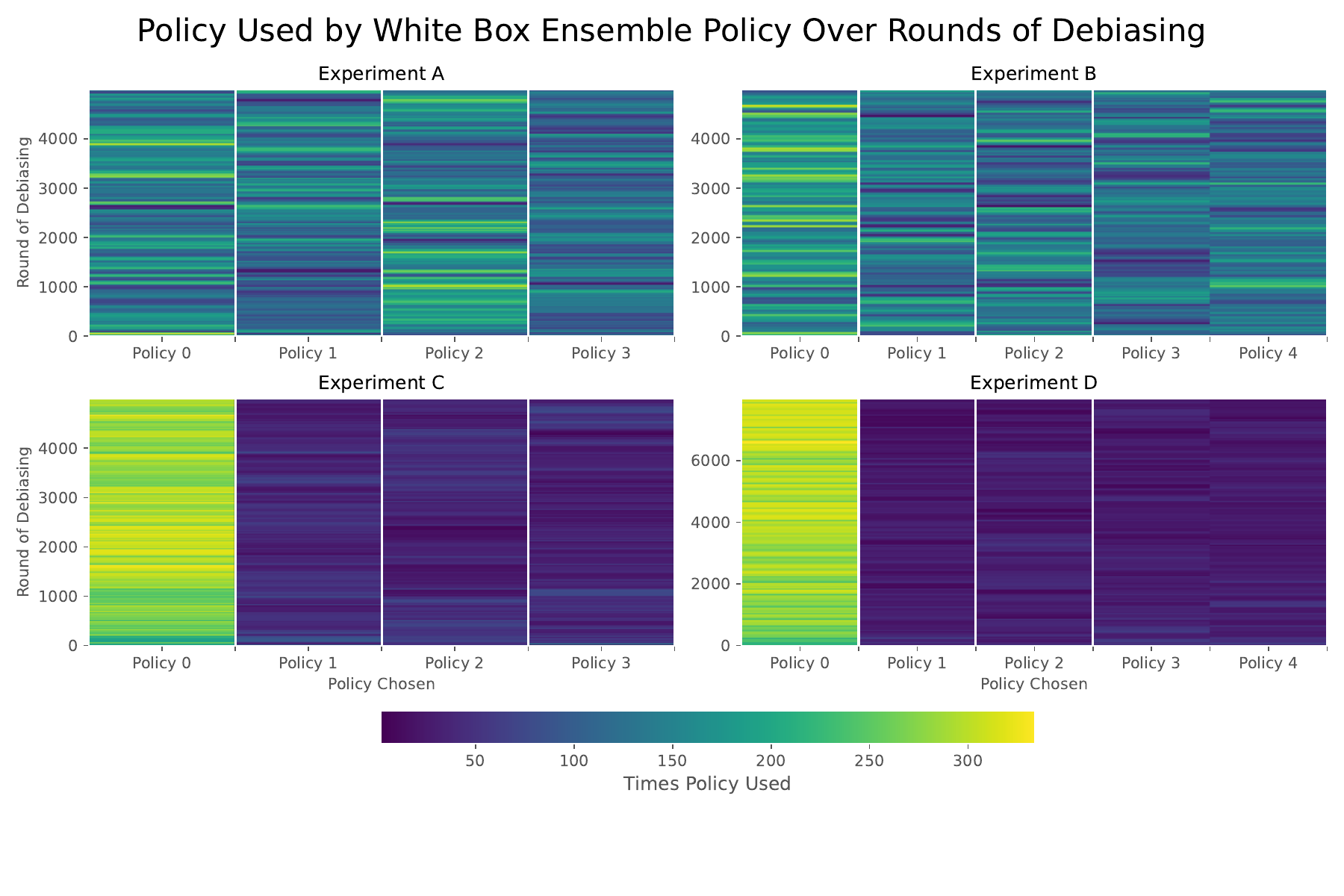}
    \end{centering}
    \caption{Heatmap of distribution of how constituent policies were chosen by the ensembling process across rounds of debiasing. For instance, in experiments A and B, no single policy is prioritized substantially more than the others, while in experiments C and D, the first policy ends up being prioritized.}
    \label{fig:policy-choice}
\end{figure}

\begin{figure}[H]
    \includegraphics[width=\textwidth]{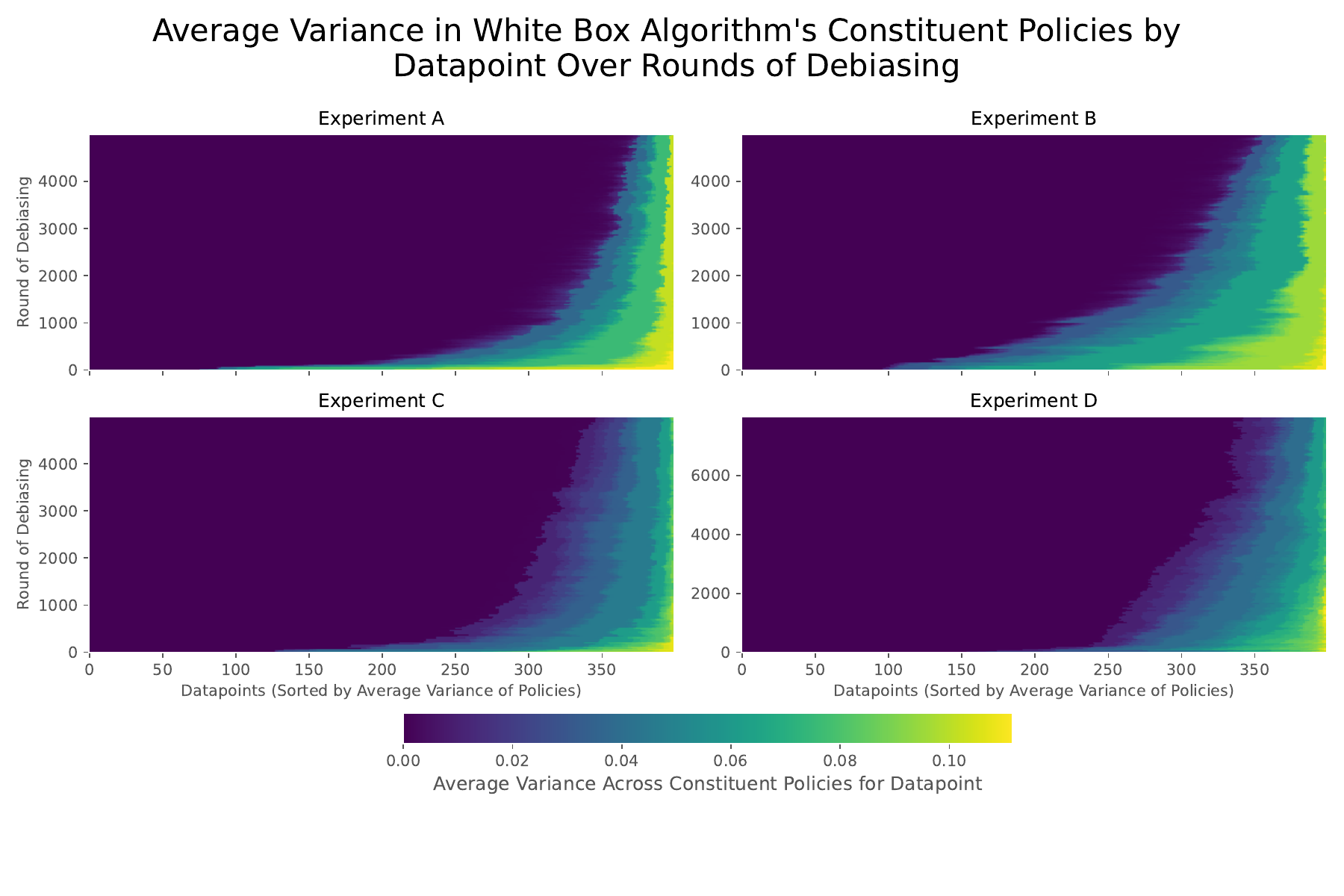}
    \caption{Heatmap of the average variance across the ensembled policies for each training datapoint over the rounds of debiasing. The variance is taken over each coordinate of the induced policies, then averaged across coordinate, for each datapoint at each round of debiasing. For ease of interpretation, on the $x$-axis, datapoints in each round of debiasing are sorted by their average variance.}
    \label{fig:policy-var}
\end{figure}

\paragraph{Efficiency} Since the white box algorithm in our experiments converges to near optimal payoff while the black box algorithm does not, it is tempting to advocate for using the white box algorithm whenever white box access is possible. However, it is substantially slower, both in terms of the time it takes to converge and in practical runtime. In practice, the algorithms' bottleneck is the evaluation of the downstream optimization problem. The black box algorithm requires far fewer of these computations, which makes a substantial difference in practice: for instance, on our machine, the black box algorithm for Experiment A took 3.8 minutes, while the white box algorithm took 83 minutes.

\section{Discussion of Limitations}
Our algorithms operate in the standard batch/distributional setting, and  the guarantees that we prove are limited to the setting in which the data encountered at deployment time is distributed identically to the data used in training. An important open question is how to adapt similar practical techniques to give more robust techniques that allow for various kinds of distribution shift. Techniques of \cite{noarov2023highdimensional} could be adapted to give similar ensembling algorithms in the online adversarial setting when the prediction target can be observed shortly after prediction at test time; but these algorithms are mainly of theoretical interest, and more practical variants would need to be developed. Even in the batch setting, our white box ensembling method is computationally very expensive, and significantly more efficient algorithms with similar guarantees would be important improvements. 

\bibliographystyle{plainnat}
\bibliography{bib}

\begin{thebibliography}{25}
\providecommand{\natexlab}[1]{#1}
\providecommand{\url}[1]{\texttt{#1}}
\expandafter\ifx\csname urlstyle\endcsname\relax
  \providecommand{\doi}[1]{doi: #1}\else
  \providecommand{\doi}{doi: \begingroup \urlstyle{rm}\Url}\fi

\bibitem[Cheng et~al.(2020)Cheng, Kolobov, and Agarwal]{cheng2020policy}
Ching-An Cheng, Andrey Kolobov, and Alekh Agarwal.
\newblock Policy improvement via imitation of multiple oracles.
\newblock \emph{Advances in Neural Information Processing Systems}, 33:\penalty0 5587--5598, 2020.

\bibitem[Chu et~al.(2023)Chu, Zhang, Bai, and Chen]{chu2023data}
Hongrui Chu, Wensi Zhang, Pengfei Bai, and Yahong Chen.
\newblock Data-driven optimization for last-mile delivery.
\newblock \emph{Complex \& Intelligent Systems}, 9\penalty0 (3):\penalty0 2271--2284, 2023.

\bibitem[Deo et~al.(2015)Deo, Rajaram, Rath, Karmarkar, and Goetz]{deo2015planning}
Sarang Deo, Kumar Rajaram, Sandeep Rath, Uday~S Karmarkar, and Matthew~B Goetz.
\newblock Planning for hiv screening, testing, and care at the veterans health administration.
\newblock \emph{Operations research}, 63\penalty0 (2):\penalty0 287--304, 2015.

\bibitem[Donti et~al.(2017)Donti, Amos, and Kolter]{donti2017task}
Priya Donti, Brandon Amos, and J~Zico Kolter.
\newblock Task-based end-to-end model learning in stochastic optimization.
\newblock In \emph{Advances in Neural Information Processing Systems}, pages 5484--5494, 2017.

\bibitem[Du et~al.(2024)Du, Ngo, and Wu]{stevenetal}
Ally~Yalei Du, Dung~Daniel Ngo, and Zhiwei~Steven Wu.
\newblock Reconciling model multiplicity for downstream decision making.
\newblock \emph{CoRR}, 2024.

\bibitem[Elmachtoub and Grigas(2022)]{elmachtoub2022smart}
Adam~N Elmachtoub and Paul Grigas.
\newblock Smart “predict, then optimize”.
\newblock \emph{Management Science}, 68\penalty0 (1):\penalty0 9--26, 2022.

\bibitem[Gallien et~al.(2015)Gallien, Mersereau, Garro, Mora, and Vidal]{gallien2015initial}
J{\'e}r{\'e}mie Gallien, Adam~J Mersereau, Andres Garro, Alberte~Dapena Mora, and Mart{\'\i}n~N{\'o}voa Vidal.
\newblock Initial shipment decisions for new products at zara.
\newblock \emph{Operations Research}, 63\penalty0 (2):\penalty0 269--286, 2015.

\bibitem[Gammelli et~al.(2022)Gammelli, Wang, Prak, Rodrigues, Minner, and Pereira]{gammelli2022predictive}
Daniele Gammelli, Yihua Wang, Dennis Prak, Filipe Rodrigues, Stefan Minner, and Francisco~Camara Pereira.
\newblock Predictive and prescriptive performance of bike-sharing demand forecasts for inventory management.
\newblock \emph{Transportation Research Part C: Emerging Technologies}, 138:\penalty0 103571, 2022.

\bibitem[Garg et~al.(2019)Garg, Kim, and Reingold]{garg2019tracking}
Sumegha Garg, Michael~P Kim, and Omer Reingold.
\newblock Tracking and improving information in the service of fairness.
\newblock In \emph{Proceedings of the 2019 ACM Conference on Economics and Computation}, pages 809--824, 2019.

\bibitem[Garg et~al.(2024)Garg, Jung, Reingold, and Roth]{garg2024oracle}
Sumegha Garg, Christopher Jung, Omer Reingold, and Aaron Roth.
\newblock Oracle efficient online multicalibration and omniprediction.
\newblock In \emph{Proceedings of the 2024 Annual ACM-SIAM Symposium on Discrete Algorithms (SODA)}, pages 2725--2792. SIAM, 2024.

\bibitem[Globus-Harris et~al.(2023{\natexlab{a}})Globus-Harris, Gupta, Jung, Kearns, Morgenstern, and Roth]{globus2023multicalibrated}
Ira Globus-Harris, Varun Gupta, Christopher Jung, Michael Kearns, Jamie Morgenstern, and Aaron Roth.
\newblock Multicalibrated regression for downstream fairness.
\newblock In \emph{Proceedings of the 2023 AAAI/ACM Conference on AI, Ethics, and Society}, pages 259--286, 2023{\natexlab{a}}.

\bibitem[Globus-Harris et~al.(2023{\natexlab{b}})Globus-Harris, Harrison, Kearns, Roth, and Sorrell]{globus2023multicalibration}
Ira Globus-Harris, Declan Harrison, Michael Kearns, Aaron Roth, and Jessica Sorrell.
\newblock Multicalibration as boosting for regression.
\newblock In \emph{Proceedings of the 40th International Conference on Machine Learning}, pages 11459--11492, 2023{\natexlab{b}}.

\bibitem[Gopalan et~al.(2022)Gopalan, Kalai, Reingold, Sharan, and Wieder]{gopalan2022omnipredictors}
Parikshit Gopalan, Adam~Tauman Kalai, Omer Reingold, Vatsal Sharan, and Udi Wieder.
\newblock Omnipredictors.
\newblock In \emph{13th Innovations in Theoretical Computer Science Conference (ITCS 2022)}. Schloss-Dagstuhl-Leibniz Zentrum f{\"u}r Informatik, 2022.

\bibitem[Gopalan et~al.(2024)Gopalan, Okoroafor, Raghavendra, Shetty, and Singhal]{gopalan2024omnipredictors}
Parikshit Gopalan, Princewill Okoroafor, Prasad Raghavendra, Abhishek Shetty, and Mihir Singhal.
\newblock Omnipredictors for regression and the approximate rank of convex functions.
\newblock \emph{arXiv preprint arXiv:2401.14645}, 2024.

\bibitem[{Gurobi Optimization, LLC}(2023)]{gurobi}
{Gurobi Optimization, LLC}.
\newblock {Gurobi Optimizer Reference Manual}, 2023.
\newblock URL \url{https://www.gurobi.com}.

\bibitem[H{\'e}bert-Johnson et~al.(2018)H{\'e}bert-Johnson, Kim, Reingold, and Rothblum]{hebert2018multicalibration}
Ursula H{\'e}bert-Johnson, Michael Kim, Omer Reingold, and Guy Rothblum.
\newblock Multicalibration: Calibration for the (computationally-identifiable) masses.
\newblock In \emph{International Conference on Machine Learning}, pages 1939--1948. PMLR, 2018.

\bibitem[Hu et~al.(2023)Hu, Navon, Reingold, and Yang]{hu2023omnipredictors}
Lunjia Hu, Inbal Rachel~Livni Navon, Omer Reingold, and Chutong Yang.
\newblock Omnipredictors for constrained optimization.
\newblock In \emph{International Conference on Machine Learning}, pages 13497--13527. PMLR, 2023.

\bibitem[Liu et~al.(2023)Liu, Yoneda, Wang, Walter, and Chen]{liu2023active}
Xuefeng Liu, Takuma Yoneda, Chaoqi Wang, Matthew Walter, and Yuxin Chen.
\newblock Active policy improvement from multiple black-box oracles.
\newblock In \emph{International Conference on Machine Learning}, pages 22320--22337. PMLR, 2023.

\bibitem[Noarov et~al.(2023)Noarov, Ramalingam, Roth, and Xie]{noarov2023highdimensional}
Georgy Noarov, Ramya Ramalingam, Aaron Roth, and Stephan Xie.
\newblock High-dimensional prediction for sequential decision making, 2023.

\bibitem[Pedregosa et~al.(2011)Pedregosa, Varoquaux, Gramfort, Michel, Thirion, Grisel, Blondel, Prettenhofer, Weiss, Dubourg, Vanderplas, Passos, Cournapeau, Brucher, Perrot, and Duchesnay]{scikit-learn}
F.~Pedregosa, G.~Varoquaux, A.~Gramfort, V.~Michel, B.~Thirion, O.~Grisel, M.~Blondel, P.~Prettenhofer, R.~Weiss, V.~Dubourg, J.~Vanderplas, A.~Passos, D.~Cournapeau, M.~Brucher, M.~Perrot, and E.~Duchesnay.
\newblock Scikit-learn: Machine learning in {P}ython.
\newblock \emph{Journal of Machine Learning Research}, 12:\penalty0 2825--2830, 2011.

\bibitem[Roth(2022)]{roth2022uncertain}
Aaron Roth.
\newblock Uncertain: Modern topics in uncertainty estimation, 2022.

\bibitem[Roth et~al.(2023)Roth, Tolbert, and Weinstein]{roth2023reconciling}
Aaron Roth, Alexander Tolbert, and Scott Weinstein.
\newblock Reconciling individual probability forecasts, 2023.

\bibitem[Schapire and Freund(2013)]{schapire2013boosting}
Robert~E Schapire and Yoav Freund.
\newblock Boosting: Foundations and algorithms.
\newblock \emph{Kybernetes}, 42\penalty0 (1):\penalty0 164--166, 2013.

\bibitem[Stratigakos et~al.(2022)Stratigakos, Camal, Michiorri, and Kariniotakis]{stratigakos2022prescriptive}
Akylas Stratigakos, Simon Camal, Andrea Michiorri, and Georges Kariniotakis.
\newblock Prescriptive trees for integrated forecasting and optimization applied in trading of renewable energy.
\newblock \emph{IEEE Transactions on Power Systems}, 37\penalty0 (6):\penalty0 4696--4708, 2022.

\bibitem[Zhao et~al.(2021)Zhao, Kim, Sahoo, Ma, and Ermon]{zhao2021calibrating}
Shengjia Zhao, Michael Kim, Roshni Sahoo, Tengyu Ma, and Stefano Ermon.
\newblock Calibrating predictions to decisions: A novel approach to multi-class calibration.
\newblock \emph{Advances in Neural Information Processing Systems}, 34:\penalty0 22313--22324, 2021.

\end{thebibliography}

\newpage
\appendix
\section{Proofs}
\label{ap:proofs}

\begin{proof}[Proof of Lemma \ref{lem:update-conv}]
    Each round within the procedure \textsc{Update($\cdot, \cdot)$} finds a consistency violation with respect to the input model $h$ and collection of sets $\calC$ of the form: 
    $\Pr[x \in C] \|\E_{\cD} [ y - h(x) | x \in C \|_{\infty} > \alpha$ for some $C \in \calC$. 
    By Lemma \ref{lem:monotone-sq-error}, we know that the decrease of squared error between the policies of two adjacent rounds of $\textsc{Update}(\cdot, \cdot)$ denoted $h$ and $h'$ satisfy:
    \begin{align*}
         \ecd[\|h(x) - y\|_2^2] - \ecd[\|h'(x) - y\|_2^2] = \Pr[x \in C] \cdot \| \ecd[y - h(x) | x \in C ]\|_2^2.
    \end{align*}
    Additionally, we know that:
    \begin{align*}
       \Vert\E_{\cD}[y - h(x) | x \in C ]\Vert_2^2 \geq \Vert\E_{\cD}[y - h(x) | x \in C ]\Vert_{\infty}^2.
    \end{align*}
    By the stopping condition of $\textsc{Update}(\cdot, \cdot)$, we know that while the procedure has not terminated:
    \begin{align*}
        \max_{C \in \calC} \Pr[x \in C] \Vert\E_{\cD}[y - h(x) | x \in C ]\Vert_{\infty} > \alpha.
    \end{align*}
    Therefore, we know that in each iteration of $\textsc{Update}(\cdot, \cdot)$ the squared error of the model $h$ drops by
    \begin{align*}
         \ecd[\|h(x) - y\|_2^2] - \ecd[\|h'(x) - y\|_2^2] \geq \frac{\alpha^2}{\Pr[x \in C]} \geq \alpha^2.
    \end{align*}
    Since each $h(x) \in [0, M]^d \ \forall \ x \in \cX$ and $\cY \subseteq [0, M]^d$, we know that the squared error can be at most $d \cdot M^2$. 
\end{proof}

\begin{proof} [ Proof of Lemma \ref{lem:single-consistent} ] 
    \begin{align*}
        \E_{\cD}[\pi_h(x) \cdot y] &\geq \E_{\cD}[\pi_h(x) \cdot h(x)] - 2d \sqrt{\alpha M} \\
        & \geq \ecd[\pi(x) \cdot h(x)] - 2d \sqrt{\alpha M} \\
        & = \ecd[\sum_{i \in [d]} \pi(x)_i \cdot h(x)_i ] - 2d \sqrt{\alpha M} \\
        & \geq \sum_{i \in [d]} \sum_{\tau \in \cT} \Pr[ \pi(x)_i \in \tau] \left( \ecd[\tau_1 \cdot h(x)_i - |\tau_1 - \pi(x)_i| \cdot|h(x)_i| \ | \pi(x)_i \in \tau]  \right) -  2d \sqrt{\alpha M} \\
        & \geq \sum_{i \in [d]} \sum_{\tau \in \cT} \Pr[ \pi(x)_i \in \tau] \left( \ecd[\tau_1 \cdot h(x)_i | \pi(x)_i \in \tau]  - \frac{wM}{2} \right) -  2d \sqrt{\alpha M} \\
        & \geq \sum_{i \in [d]} \sum_{\tau \in \cT} \Pr [\pi(x)_i \in \tau] \left( \ecd[\tau_1 \cdot y_i | \pi(x)_i \in \tau] - \frac{\alpha}{\Pr[\pi(x)_i \in \tau]} - \frac{wM}{2} \right) -  2d \sqrt{\alpha M} \\
        & \geq \sum_{i \in [d]} \sum_{\tau \in \cT} \Pr [\pi(x)_i \in \tau] \left( \ecd[\pi(x)_i \cdot y_i - |\tau_1 - \pi(x)_i| \cdot |y_i| \ | \pi(x)_i \in \tau] - \frac{\alpha}{\Pr[\pi(x)_i \in \tau]} - \frac{wM}{2} \right) \\
        & \quad \quad \quad \quad \quad -  2d \sqrt{\alpha M} \\
        & \geq \sum_{i \in [d]} \sum_{\tau \in \cT} \Pr [\pi(x)_i \in \tau] \left( \ecd[\pi(x)_i \cdot y_i | \pi(x)_i \in \tau] - \frac{\alpha}{\Pr[\pi(x)_i \in \tau]} - wM \right) -  2d \sqrt{\alpha M}, \\
        & = \sum_{i \in [d]} \sum_{\tau \in \cT} \Pr [\pi(x)_i \in \tau] \ecd[\pi(x)_i \cdot y_i | \pi(x)_i \in \tau] - \frac{\alpha d }{w} - wMd -  2d \sqrt{\alpha M},
\end{align*}
    where the first inequality holds by Corollary \ref{cor:self-rev}, the second by the optimality of policy $\pi_h$ with respect to $h$ rather than the policy $\pi$, and the fifth by the consistency of $h$ with respect to the policy $\pi$. 
    Then, taking $ w = \sqrt{\alpha / M},$ we have
    \begin{align*}
        \sum_{i \in [d]} \sum_{\tau \in \cT} \Pr [\pi(x)_i \in \tau] \ecd[\pi(x)_i &\cdot y_i | \pi(x)_i \in \tau] - \frac{\alpha d }{w} - wMd -  2d \sqrt{\alpha M} \\
        = \ecd[\pi(x) \cdot r] - 4 d \sqrt{\alpha M}.
    \end{align*}
\end{proof}

\begin{proof} [Proof of Lemma \ref{lem:update-conv-adaptive}]
    This proof follows similarly to that of Lemma \ref{lem:update-conv}.
    As stated in Lemmas \ref{lem:monotone-sq-error} and \ref{lem:update-conv}, we know that between two adjacent rounds within the $\textsc{Update}(f, \cdot)$ procedure for some model $f$, the squared error of the model $f$ drops by at least $\alpha^2$.
    Therefore, we know that between two adjacent invocations of $\textsc{Update}(h^t, \cdot)$ and $\textsc{Update}(h^{t+1}, \cdot)$ within Algorithm \ref{alg-max-ensembling}, the squared error of model $h^{t+1}$ must be at least $\alpha^2$ less than the squared error of model $h^t$. Since $\cY \subseteq \mathbb{R}^d$ and $\|y\|_{\infty} \leq M$ for all $y \in \cY$, it must be the case that Algorithm \ref{alg-max-ensembling} terminates after at most $\frac{dM^2}{\alpha^2}$ invocations of Algorithm \ref{alg}.
\end{proof}

\begin{proof} [Proof of Lemma \ref{lem:white-box-conv}]
    This follows directly from $k$ applications of Lemma \ref{lem:update-conv-adaptive}, since Algorithm \ref{alg-max-ensembling} maintains $k$ separate predictors which are being iteratively updated using the $\textsc{Update}(\cdot, \cdot)$ procedure on an adaptively chosen sequence of conditioning events.
\end{proof}

\begin{proof}[Proof of Lemma \ref{lem:ensemble-consistent}]
    \begin{align*}
        \ecd[\pi_{\bf h} (x) \cdot y] &= \sum_{x \in \cX} \Pr[X = x] \left( \pi_{\bf h}(x) \cdot \E[r|x] \right) \\
        &=  \sum_{x \in \cX}  \sum_{i \in [k]} \Pr[X = x] \1_{i = i*(x)} \cdot \pi_{h_i}(x) \cdot \E[y|x] \\
        &\geq \sum_{i \in [k]} \sum_{\tau \in \cT} \sum_{j \in [d]}  \Pr[ \pi_{h_i}(x)_j \in \tau, i^*(x) = i] \left( \ecd[ \tau_1 \cdot y_j - | \tau_1 - \pi_{h_i}(x)_j | \cdot |y_j| \ | \pi_{h_i}(x)_j \in \tau, i^*(x) = i] \right) \\
        &\geq \sum_{i \in [k]} \sum_{\tau \in \cT} \sum_{j \in [d]}  \Pr[ \pi_{h_i}(x)_j \in \tau, i^*(x) = i] \left( \ecd[ \tau_1 \cdot y_j | \pi_{h_i}(x)_j \in \tau, i^*(x) = i] - \frac{wM}{2} \right) \\
        &\geq \sum_{i \in [k]} \sum_{\tau \in \cT} \sum_{j \in [d]}  \Pr[ \pi_{h_i}(x)_j \in \tau, i^*(x) = i] \big( \ecd[ \tau_1 \cdot h_i(x)_j| \pi_{h_i}(x)_j \in \tau, i^*(x) = i]  \\
        & \quad \quad \quad \quad  - \frac{\alpha}{\Pr[\pi_{h_i}(x)_j \in \tau, i^*(x) = i]} - \frac{wM}{2} \big) \\
        &\geq \sum_{i \in [k]} \sum_{\tau \in \cT} \sum_{j \in [d]}  \Pr[ \pi_{h_i}(x)_j \in \tau, i^*(x) = i] \\
        & \quad \quad \quad \quad \big( \ecd[ \pi_{h_i}(x)_j \cdot h_i(x)_j - |\tau_1 - \pi_{h_i}(x)_j| \cdot |h_i(x)_j| \ | \pi_{h_i}(x)_j \in \tau, i^*(x) = i] \\
        & \quad \quad \quad \quad - \frac{\alpha}{\Pr[\pi_{h_i}(x)_j \in \tau, i^*(x) = i]} - \frac{wM}{2} \big) \\
        &\geq \sum_{i \in [k]} \sum_{\tau \in \cT} \sum_{j \in [d]}  \Pr[ \pi_{h_i}(x)_j \in \tau, i^*(x) = i] \big( \ecd[ \pi_{h_i}(x)_j \cdot h_i(x)_j| \pi_{h_i}(x)_j \in \tau, i^*(x) = i] \\
        & \quad \quad \quad \quad - \frac{\alpha}{\Pr[\pi_{h_i}(x)_j \in \tau, i^*(x) = i]} - wM \big) \\
        &= \ecd[\pi_{\bf h}(x) \cdot {\bf h}] - \alpha k |\cT| d - wMd \\
        &= \ecd[\pi_{\bf h}(x) \cdot {\bf h}] - \frac{\alpha k d}{w} - wMd,
    \end{align*}
    where the third inequality follows from the consistency conditions on the ensemble {\bf h}.
    Then, setting $w = \sqrt{\alpha k / M},$ we have
    \begin{align*}
    \ecd[\pi_{\bf h}(x) \cdot {\bf h}] - \frac{\alpha k d}{w} - wMd = \ecd[\pi_{\bf h}(x) \cdot {\bf h}] - 2d \sqrt{\alpha k M}.
    \end{align*}
\end{proof}

\begin{proof}[Proof of Lemma \ref{lem:white-box-best}]
    \begin{align*}
        \ecd[\pi_{\bf h}(x) \cdot y] &\geq \ecd[ \pi_{\bf h}(x) \cdot {\bf h}(x) ] - 2d \sqrt{\alpha k M} \\ 
        & \geq \ecd[ \max_{j \in [k]} \pi_{h_j}(x) \cdot h_j(x)] - 2d \sqrt{\alpha k M} ,
    \end{align*}
    where the first inequality follows from Lemma \ref{lem:ensemble-consistent} and the second inequality follows from Definition \ref{def:ensemble-policy}.
\end{proof}

\begin{proof} [Proof of Lemma \ref{lem:white-box-swap}]
    \begin{align*}
        \ecd[\pi_{\bf h}(x) \cdot y] &\geq \ecd[\pi_{\bf h}(x) \cdot {\bf h}(x)] - 2d \sqrt{\alpha k M} \\
        & \geq \ecd[\pi_{h_{\phi(i^*(x))} } (x) \cdot h_{\phi(i^*(x))} (x)] - 2d \sqrt{\alpha k M} \\
        & \geq \ecd[\pi_{h_{\phi(i^*(x))}} (x) \cdot y] - 4 d \sqrt{\alpha k M}, 
    \end{align*}
    where the first inequality follows from Lemma \ref{lem:ensemble-consistent}, the second inequality follows from Definition \ref{def:ensemble-policy}, and the third inequality follows from the ``cross" consistency conditions, that model $h_s$ is approximately consistent with respect to its own policy conditioned on model $h_t$ having the highest self-evaluation, for all $s, t \in [k]$.
\end{proof}

\begin{proof}  [Proof of Lemma \ref{lem:black-box-utility}]
    Fix $\pi \in \cP \cup \{ \pi_h \}, i \in [d], \tau \in \cT$.
    \begin{align*}
        \mathbb{E}_{\cD}[\pi_h(x) \cdot r | \pi(x)_i \in \tau] &=  \mathbb{E}_{\cD}[ \sum_{j \in [d]} \pi_h(x)_j \cdot y_j | \pi(x)_i \in \tau ] \\ 
        &= \sum_{j \in [d]} \mathbb{E}_{\cD}[\pi_h(x)_j \cdot y_j |\pi(x)_i \in \tau ] \\
        &\geq \sum_{j \in [d]} \mathbb{E}_{\cD}[\tau_1 \cdot y_j - | \tau_1 - \pi_h(x)_j | \cdot | y_j | \ |\pi(x)_i \in \tau ] \\
        &\geq \sum_{j \in [d]} \left( \mathbb{E}_{\cD}[\tau_1 \cdot y_j |\pi(x)_i \in \tau ] - \frac{wM}{2} \right) \\
        &\geq \sum_{j \in [d]} \mathbb{E}_{\cD}[\tau_1 \cdot h(x) |\pi(x)_i \in \tau ] - \frac{wMd}{2} - \frac{\alpha d}{\Pr[ \pi(x)_i \in \tau]} \\
        &\geq \mathbb{E}_{\cD}[\pi_h(x) \cdot h(x) | \pi(x)_i \in \tau ] - wMd - \frac{\alpha d}{\Pr[\pi(x)_i \in \tau] } \\
        &\geq \mathbb{E}_{\cD}[\pi(x) \cdot h(x) | \pi(x)_i \in \tau ] - wMd - \frac{\alpha d}{\Pr[\pi(x)_i \in \tau] } \\
        &\geq \mathbb{E}_{\cD}[\tau_1 \cdot h(x) | \pi(x)_i \in \tau ] - \frac{3wMd}{2} - \frac{ \alpha d}{\Pr[\pi(x)_i \in \tau] } \\
        &\geq \mathbb{E}_{\cD}[\tau_1 \cdot y | \pi(x)_i \in \tau ] - \frac{3wMd}{2}  - \frac{2 \alpha d}{\Pr[\pi(x)_i \in \tau] } \\
        &\geq \mathbb{E}_{\cD}[\pi(x) \cdot y | \pi(x)_i \in \tau ] - 2wMd - \frac{2 \alpha d}{\Pr[\pi(x)_i \in \tau] }
    \end{align*}
    where the third and penultimte inequalities follow from the consistency of $h$ to policy $\pi$ and the fifth from the pointwise optimality of policy $\pi_h$ to model $h$.
\end{proof}

\end{document}